\def\caln{{\cal N}}
\def\calg{{\cal G}}
\def\cale{{\cal E}}
\def\cali{{\cal I}}
\def\calo{{\cal O}}
\def\calh{{\cal H}}
\def\calm{{\cal M}}
\def\calr{{\cal R}}
\def\calj{{\cal J}}
\def\cale{{\cal E}}
\def\calp{{\cal P}}
\def\calj{{\cal J}}
\def\calx{{\cal X}}
\def\ug{U_{\tt G}}
\def\ue{U_{\tt E}}
\def\up{u_{\tt P}}
\def\L2e{{\cal L}_{2e}}
\def\rea{\mathbb{R}}
\def\sign{\mbox{sign}}
\def\diag{\mbox{diag}}
\def\col{\mbox{col}}
\def\hal{{1 \over 2}}
\def\diag{\mbox{diag}}
\def\min{{\mbox{min}}}
\def\begmat#1{\begin{bmatrix}#1\end{bmatrix}}
\def\begquo{\begin{quote}}
\def\endquo{\end{quote}}
\def\begequarr{\begin{eqnarray}}
\def\endequarr{\end{eqnarray}}
\def\begequarrs{\begin{eqnarray*}}
\def\endequarrs{\end{eqnarray*}}
\def\begarr{\begin{array}}
\def\endarr{\end{array}}
\def\begequ{\begin{equation}}
\def\endequ{\end{equation}}
\def\begdes{\begin{description}}
\def\enddes{\end{description}}
\def\begenu{\begin{enumerate}}
\def\begite{\begin{itemize}}
\def\endite{\end{itemize}}
\def\endenu{\end{enumerate}}
\def\lef[{\left[\begin{array}}
\def\rig]{\end{array}\right]}
\def\begcen{\begin{center}}
\def\endcen{\end{center}}
\def\endrem{\end{remark}}
\def\begdef{\begin{definition}}
\def\enddef{\end{definition}}
\def\begpro{\begin{proposition}}
\def\endpro{\end{proposition}}
\def\begfac{\begin{fact}}
\def\endfac{\end{fact}}
\def\begsubequ{\begin{subequations}}
\def\endsubequ{\end{subequations}}
\newcommand{\red}[1]{{\color{red} #1}}
\newtheorem{definition}{Definition}
\newtheorem{assumption}{Assumption}
\newtheorem{remark}{Remark}
\newtheorem{proposition}{Proposition}
\definecolor{amethyst}{rgb}{0.6, 0.4, 0.8}
\definecolor{carminepink}{rgb}{0.92, 0.3, 0.26}
\definecolor{color1}{rgb}{0.196, 0.722, 0.592}
\definecolor{color2}{HTML}{D76364}
\definecolor{babyblue}{rgb}{0.54, 0.81, 0.94}
\begin{document}

\title{Modeling, Control, and Stiffness Regulation of Layer Jamming-based Continuum Robots}

\author{Yeman Fan, Bowen Yi, and Dikai Liu
        % <-this % stops a space
\thanks{This work was supported in part by Australian Research Council (ARC) Discovery Project under Grant DP200102497, the Robotics Institute of the University of Technology Sydney, Natural Sciences and Engineering Research Council of Canada (NSERC), and the Programme PIED of Polytechnique Montr\'eal. (Bowen Yi and Yeman Fan contributed equally to this work.) (Corresponding author: Bowen Yi.)

Bowen Yi is with the Department of Electrical Engineering, Polytechnique Montr\'eal and GERAD, Montr\'eal, QC H3T 1J4, Canada (bowen.yi@polymtl.ca)

Yeman Fan and Dikai Liu are with the Robotics Institute, Faculty of Engineering and Information Technology, University of Technology Sydney, Sydney, NSW 2007, Australia (e-mail: yeman.fan@uts.edu.au; dikai.liu@uts.edu.au).}% <-this % stops a space
\thanks{Manuscript received 2025.}}

% The paper headers
% \markboth{Journal of \LaTeX\ Class Files,~Vol.~14, No.~8, August~2021}%
% {Shell \MakeLowercase{\textit{et al.}}: A Sample Article Using IEEEtran.cls for IEEE Journals}

%\IEEEpubid{0000--0000/00\$00.00~\copyright~2021 IEEE}
% Remember, if you use this you must call \IEEEpubidadjcol in the second
% column for its text to clear the IEEEpubid mark.

\maketitle

\begin{abstract}
Continuum robots with variable compliance have gained significant attention due to their adaptability in unstructured environments. Among various stiffness modulation techniques, layer jamming (LJ) provides a simple yet effective approach for achieving tunable stiffness. However, most existing LJ-based continuum robot models rely on static or quasi-static approximations, lacking a rigorous control-oriented dynamical formulation. Consequently, they are unsuitable for real-time control tasks requiring simultaneous regulation of configuration and stiffness and fail to capture the full dynamic behavior of LJ-based continuum robots. To address this gap, this paper proposes a port-Hamiltonian formulation for LJ-based continuum robots, formally characterizing the two key phenomena---shape locking and tunable stiffness---within a unified energy-based framework. Based on this model, we develop a passivity-based control approach that enables decoupled regulation of stiffness and configuration with provable stability guarantees. We validate the proposed framework through comprehensive experiments on the OctRobot-I continuum robotic platform. The results demonstrate consistency between theoretical predictions and empirical data, highlighting the feasibility of our approach for real-world implementation. 
\end{abstract}

\begin{IEEEkeywords}
Continuum robots, layer jamming, nonlinear control, stiffening, energy shaping, dynamic modeling
\end{IEEEkeywords}

\section{Introduction} \label{sec:1}
\subsection{Background and Related Work}
Continuum robots are important in many applications due to their inherent flexibility and dexterity, such as minimally invasive surgeries, whole-arm grasping, and search-and-rescue operations \cite{Burgner}. When interacting with the environment or humans, there is a need to actively change the dynamical response of the robots, particularly mechanical impedance or hybrid motion/force \cite{BAJSIM}. Indeed, many continuum robots have integrated variable stiffness techniques within their design, allowing them to switch between soft, flexible motion and rigid resistance, a process known as stiffening. This adaptability significantly expands their range of applications, enhancing their versatility across diverse domains \cite{NARetalRAL, CLAROJ, YANetal}.

Among the various stiffening techniques, jamming-based approaches have shown great success in achieving adjustable stiffness in continuum robots, providing rapid and reversible responses \cite{CLAROJtro, SANetal, LANetal}. Based on the jamming materials used, these methods can be broadly classified into fiber, granular, fabrics, and layer jamming. A comprehensive comparison of these techniques can be found in \cite{FAN24RCIM}. Notably, the concept of layer jamming (LJ) in continuum robots was explored earlier in \cite{KIMetalIROS,KIMetal} and has since received particular attention due to its advantages of being lightweight and space-efficient. In LJ-based continuum robots, an airtight pneumatic chamber contains a series of overlapping jamming flaps, typically made of plastic or paper layers, which either encase the robot's spine or form its body;~see Fig.~\ref{fig:jamming}. This mechanism exploits the friction between layers that can be controlled by external pressure via a \emph{vacuum}. Importantly, in addition to mechanical actuation (for configuration control), this pressure can be seen as an additional control input for LJ-based continuum robots, providing greater flexibility for real-time tasks. As a result, layer jamming provides a broad range of controllable stiffness.

\begin{figure}[!h]
    \centering
    \includegraphics[width = 0.93\linewidth]{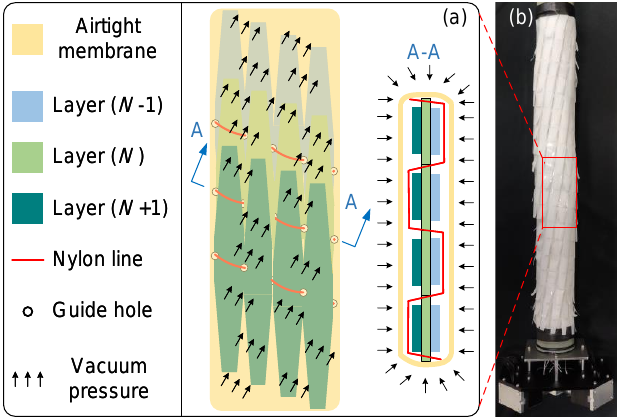}
    \caption{Schematic of a layer-jamming structure.}
    \label{fig:jamming}
\end{figure}

Real-time dynamic control is one of the most important topics in the field of robotics, where it is well-established for rigid-link robots \cite{SCISIC} but still in its infancy for continuum robots. Previously, modeling the dynamic behavior of a continuum robot was considered overwhelmingly complicated. However, over the past few years, \emph{model-based control} approaches have gained resurgence, as more experimental evidence has shown that feedback control is robust to the approximations used in modeling of continuum robotic dynamics \cite{DELetalREV}. Some very recent works have further explored the dynamic modeling and model-based dynamic control of continuum robots, employing models such as the rigid-link approximation models \cite{DELetal,FRAGAR} and the dynamic Cosserat model \cite{CAAetal}. Notably, \cite{DELetal} presents the first solution in the literature on dynamic feedback control in trajectory tracking and surface following of a planar soft robot.

In addition to the aforementioned configuration control tasks, another important topic for continuum robots is stiffness modulation \cite{KIMetal,LANetal}. In many cases, it is necessary to regulate both the configuration and stiffness of the LJ-continuum robot simultaneously in real time, with the jamming mechanism adding complexity to dynamic modeling and control. As openly recognized by one of the leading groups in the field, the modeling of LJ-based continuum robots has not been well addressed yet \cite{NARetal}:

\begin{quote}
   \it ``Nevertheless, these studies have not yet provided analytical or computational models for laminar jamming beyond an initial deformation phase ...''
\end{quote}%}
% This has been further endowed in \cite{CARetal}
%
The absence of such models can hardly be overestimated: on one hand, it limits our understanding of the dynamic behavior of LJ-based continuum robots; on the other hand, it restricts our ability to design high-performance dynamic controllers for adjusting the robot's configuration and stiffness. 

To address the challenge, recent studies have developed analytical and computational models to characterize stiffness variation in LJ-based continuum robots, such as  \cite{ZHAetalBB,KIMetal,CHENetal,CARetal}. Most of these existing works focus on \emph{static} models, and notably the recent work \cite{DOetal} investigates \emph{quasistatic} modeling for a soft jamming brake and artificial muscle's operating modes and stiffness. While static and quasistatic models provide insights into the stiffness modulation of LJ-based continuum robots, they lack the necessary dynamical representation required for real-time feedback control. In \cite{IBRetal}, the authors design a stiffness- and shape-changing device based on a mechanism with a multiple-chamber structure; however, the transition phases are uncontrollable, and they point out that ``a control strategy of the phase transitions needs to be explored in depth''. Clearly, a reliable \emph{control-oriented dynamical} model for LJ-based continuum robots is needed. To the best of the authors' knowledge, no such model currently exists in the literature to characterize the dynamic behaviors of LJ-based continuum robots. Consequently, the problem of feedback control with guaranteed stability, whether model-based or data-driven, of LJ-based continuum robots is still an open problem. Addressing this challenge will provide a systematic algorithmic solution to simultaneously regulate the stiffness and configuration of LJ-based continuum robots in a precise and quantitative manner. 

%In our previous work
% \blue{In recent work \cite{YIetal23}, the feedback mechanism for dynamically adjusting the closed-loop stiffness of a class of underactuated tendon-driven continuum robots was explored. However, the jamming mechanism was not utilized,} 

In our previous work \cite{YIetal23}, we explored the feedback mechanism for dynamically adjusting the closed-loop stiffness of a class of underactuated tendon-driven continuum robots. However, we do not utilize the
jamming mechanism, and as a result, the attainable range
of this stiffness is constrained within a relatively narrow
interval due to its inherent properties. In this paper, we \emph{incorporate layer jamming} into the dynamic modeling and model-based control of continuum robots to achieve significantly broader tunability of closed-loop stiffness with enhanced performance. This is a non-trivial challenge: on one hand, even \emph{static} modeling of the jamming mechanism is inherently complex \cite{NARetal,CARetal}; on the other hand, in dynamic modeling, we must account for the intricate coupling between the stiffness mechanism and the dynamics. Notably, the phenomenon of \emph{shape locking} is frequently observed in LJ-based continuum robots \cite{ZHAetalBB,NARetal,SANetal}. This phenomenon refers to the robot's shape becoming constrained or ``locked'' due to increased interlayer friction when the vacuum pressure reaches a critical level. A reliable dynamical model must not only capture this effect but also provide an interpretable framework.

We aim to solve the following two problems for LJ-based continuum robots:
\begin{itemize}\setlength{\leftskip}{-3pt}
    \item[P1] \emph{Control-oriented dynamic modeling}: Develop a dynamical model that:
    \begin{itemize}\setlength{\leftskip}{-20pt}
        \item[(i)] is simple and interpretable, facilitating the design of a dynamic controller for simultaneous regulation of stiffness and configuration;
        \item[(ii)] captures the phenomena of shape locking and variable stiffness, ensuring compatibility with existing static modeling results.  
    \end{itemize}
\item[P2] \emph{Decoupled control of configuration and stiffness}: Based on the dynamic model design a controller that:  
\begin{itemize}  \setlength{\leftskip}{-20pt}
    \item[(iii)] has a compact form while ensuring robust and accurate control performance;  
    \item[(iv)] treats vacuum pressure as an additional control input to achieve decoupled regulation of configuration and closed-loop stiffness.  
\end{itemize}  
\end{itemize}

We adopt the energy-based approach \cite{VAN,ORTetalBOOK,FERetal,YIetalAUT}, which has proven highly effective for both modeling and real-time control of continuum robots \cite{DELetal,FRAGAR,BORetal,PAGetal}. This approach was recently applied to unified force-impedance control of serial rigid and flexible-joint robots \cite{HADSHA} by using the concept of energy tank. We propose a port-Hamiltonian formulation for LJ-based continuum robots, formally characterizing the two key phenomena---shape locking and tunable stiffness---and then enabling real-time control within a unified energy-based framework.
% In our paper, we focus on developing a model-based approach for modeling and control that explicitly captures the effects of layer jamming, aiming to derive a control-oriented dynamical model for LJ-based continuum robots.

\subsection{Contributions}
The main contributions of this work are summarized below:

\begin{itemize} \setlength{\leftskip}{-3pt} % Adjust indentation as needed
    \item[1.] We propose a novel control-oriented dynamic model for a class of LJ-based, tendon-driven continuum robots by integrating the energy-based modeling technique and the LuGre frictional model \cite{ASTDEW}. The resulting overall model is in the port-Hamiltonian form, with the vacuum pressure as an additional control input.

    \item[2.] We comprehensively analyze the proposed model and theoretically prove its capability to interpret two important phenomena of the LJ-based continuum robots: shape locking and adjustable stiffness. Utilizing the model, we provide an analytic formula for the relationship between stiffness and negative pressure, precisely aligning with our experimental results.

    \item[3.] Adaptively adjusting stiffness and achieving precise configuration control are two fundamental tasks for LJ-based continuum robots. To the best of the authors' knowledge, we propose the first feedback control approach in the literature to achieve both perspectives in a decoupling manner. Notably, the closed-loop stiffness is dominated by the negative pressure---serving as an additional control input---in the jamming sheath.
\end{itemize}

The theoretical aspects discussed above have been validated through comprehensive experiments. In particular, our findings highlight the effectiveness and superiority of the proposed modeling and control approaches. 

A preliminary version of this work, including partial results from Sections \ref{sec:2} and \ref{sec:3}, was presented at ICRA 2024 \cite{YIetalICRA}. 
This extended version significantly advances our previous work by refining the nonlinear stiffness modeling, designing a novel feedback controller with provable decoupling properties, and validating the approach through extensive closed-loop experiments. Specifically, we 1) refine the nonlinear effects arising from negative pressure in the LuGre friction model and the elastic potential energy, leading to improved model accuracy (Section~\ref{sec:2}); 2) develop a feedback controller that achieves decoupled regulation of both configuration and stiffness (Section~\ref{sec:4}); 3) conduct closed-loop experiments to validate the proposed controller and present additional experimental results (Section~\ref{sec:5}).

\subsection{Paper Organization and Notation}
\textit{Organization.} The subsequent sections of the paper are structured as follows. In Section \ref{sec:2}, we provide some preliminaries of essential concepts related to the modeling of continuum robots, as introduced in \cite{YIetal23}. Additionally, we elucidate the modifications necessary to incorporate layer jamming into this model. Subsequently, in Section \ref{sec:3}, we demonstrate the effectiveness of the proposed model in theoretically interpreting two key phenomena observed in LJ-based continuum robots. Section \ref{sec:4} presents the feedback control design, followed by the presentation of experimental results in Section \ref{sec:5}. The paper is wrapped up by some concluding remarks in Section \ref{sec:6}.  

\textit{Notation.} All functions and mappings are assumed to be $C^2$-continuous. $I_n$ is the $n \times n$ identity matrix, $0_{n \times s}$ is an
$n \times s$ matrix of zeros, and $\mathbf{1}_n := \col(1, \ldots, 1) \in \rea^n$. We also use $\mathbf{1}_{n\times n}$ to represent an $n\times n$ matrix of ones. For $x \in \rea^n$, $S \in \rea^{n \times n}$, $S=S^\top
>0$, we denote the Euclidean norm $|x|^2:=x^\top x$, and the weighted--norm $\|x\|^2_S:=x^\top S x$. Given a function $f:  \rea^n \to \rea$, we define the differential operators
$
\nabla f:=(\frac{\partial f }{ \partial x})^\top,\;\nabla_{x_i} f:=(\frac{
\partial f }{ \partial x_i})^\top,
$
where $x_i \in \rea^p$ is an element of the vector $x$. The set $\bar n$ is defined as $\bar n:= \{1,\ldots,n\}$. We use $\diag\{x_i\}~(i\in\bar n)$ to represent the diagonal matrix $\diag\{x_1, \ldots, x_n\}$, and define the set $B_\varepsilon(\calx) := \{x \in \rea^n: \inf_{y \in \calx} |x-y| \le \varepsilon \}$ for a given set $\calx \in \rea^n$. When clear from the context, the arguments of the functions and signals may be omitted.

\begin{figure}[!htp]
\fbox{\parbox {.97\linewidth}
{
\begin{center}
{\sc Nomenclature}
\end{center}
\renewcommand\arraystretch{1.2}
\setlength{\tabcolsep}{5pt}
\small
\begin{tabular}{l l}
$k_n \in \mathbb{N}_+$ & Number of layers\\
$q\in \rea^n$ & Configuration variable\\
$p\in \rea^n$ & Generalized momentum\\
$z \in \rea^n$ & Virtual bristle deflection at each (virtual) joint\\
$v\in \rea^n$ & Relative generalized velocity of surfaces in contact
\\
$\chi \in \rea^{3n}$ & Overall system state $\chi=\col(q,p,z)$\\
$D(q) \succeq0 $ & Damping matrix \\
$G(q)$ & Input matrix\\
$\up \in \rea_{\ge 0}$ & Negative pressure \emph{value} in the sheath \\
$u \in \rea^m$ & Tension input\\
$U_{\tt E}(q)$ & Elastic potential energy\\
$U_{\tt G}(q)$ & Gravitational potential energy\\
$U(q)$ & Total potential energy \\
$H(q,p)$ & Hamiltonian function of the continuum robot\\
$\calh(\chi, \up)$& Overall Hamiltonian with frictional models \\
$M(q)$ & Inertia of the robot 
\end{tabular}
}}
\end{figure}

\section{Dynamic Modeling} \label{sec:2}
\subsection{Preliminary of Jamming-free Model} \label{sec:21}
In this section, we provide a brief overview of the dynamical model of continuum robots \emph{without} layer jamming. In the subsequent sections, we will explore the dynamic modeling of LJ-based continuum robots.

We consider the control-oriented modeling of a class of underactuated tendon-driven continuum robots and concentrate on the two-dimensional case. A rigid link model is used to approximate the dynamical behavior of continuum robots as follows \cite{FRAGAR,YIetal23}:
\begin{equation}
\label{eq:model}
\begmat{\dot q \\ ~\dot p~}
= 
\begmat{~0_{n\times n} & I_n \\ - I_n  & - D(q)~}
\begmat{\nabla_q H \\ ~\nabla_p H~} 
+
\begmat{0_n \\~ G(q) u~}
\end{equation}
with the configuration variable $q \in \calx \subset \rea^n$ including all the segment angles, the generalized momentum $p \in \rea^n$, the input matrix $G:\rea^n \to \rea^{n\times m}$, the damping matrix $D(q) \in \rea^{n\times n}_{\succeq 0}$, and the tension input $u~\in~\rea^m$ ($n,m \in \mathbb{N}_+$). We consider the case of underactuation, i.e., $n > m$. The total energy is characterized by the Hamiltonian 
\begin{equation}
\label{eq:H}
H(q,p) = \frac{1}{2} p^\top M^{-1}(q) p + U(q),
\end{equation}
consisting of kinetic and potential energy,
where $M: \rea^n \to \rea^{n\times n}_{\succ 0}$ is the positive definite inertial matrix satisfying $m_1 I \preceq M(q) \preceq m_2 I $ for some $m_2\ge m_1>0$. Here, the potential energy function $U(q)$ contains the gravitational part $U_{\tt G}$ and the elastic part $U_{\tt E}$ that are functions of $q$, and satisfies 
\begin{equation}
\label{eq:U}
U(q) = U_{\tt G}(q) + U_{\tt E}(q).    
\end{equation}

As shown in Fig.~\ref{fig:pcc}, under the geometric assumptions of piecewise constant curvature (PCC) for the continuum robot, where the masses and lengths are uniform along the robot body, and that the axial deformation resulting from antagonistic tensions is negligible compared to the bending, the potential energy functions can be modeled as
\begin{equation}
\label{eq:UgUe}
\begin{aligned}
    \ug  & %~=~ \sum_{i \in \bar n} l m_i \left[ \cos(q_0 +\ldots+  q_{i-1})  - \cos(q_0+ \ldots +q_i) \right] \\
    %& 
    ~=~ \alpha_1 [1- \cos(q_\Sigma)] \\
    \ue  & %~=~ \sum_{i \in n} {k_i} \left[ q_i^2\Big(l^2 \cos\Big({q_i \over 2}\Big)^2 + r^2\Big) - l^2 \right] + k_i' q_i^2 \\ 
    %& 
    ~=~ \hal q^\top \Lambda q + U_0
\end{aligned}
\end{equation}
% \begin{equation}
% \label{eq:UgUe}
% \begin{aligned}
% 	\ug  & ~=~ \alpha_1 [1- \cos(q_\Sigma)] 
% 	\\
% 	\ue  & ~=~ \hal q^\top \Lambda q + U_0
% \end{aligned}
% \end{equation}
with
%$k_i>0$ and $k_i'>0$ are some elastic coefficients to characterize the elastic energies caused by the elongation and bending of springs (located inside the robot), 
the diagonal matrix 
$
\Lambda := \diag\{\alpha_2, \ldots, \alpha_2\}
$
and some positive scalar $U_0$, and for simplicity we define a new variable
$ q_\Sigma:= \sum_{i\in \bar n} q_i. $

Note that $\alpha_1$ and $\alpha_2$ are some elastic and gravitational coefficients, respectively. We refer the interested reader to \cite{YIetal23} for additional details about the robotic structure and its energy-based modeling procedure. 

\begin{figure}[!h]
    \centering
    \includegraphics[width = 1\linewidth]{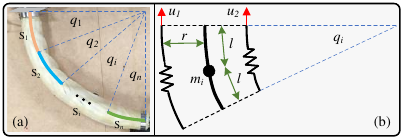}
    \caption{Illustration of the PCC assumption (a) and configuration variables (b). $S_i$ represents a constant curvature segment, $l$ is half length of a segment, $r$ is the radius of the robot, and $m_i$ is the lumped mass.}
    \label{fig:pcc}
\end{figure}
%and  are the geometric parameters of the robot,
\subsection{Assumptions and Frictional Model with Layer Jamming}
In continuum robots, the layer jamming technique provides a lightweight and rapid response approach to adjust the robots' stiffness \cite{NARetal,CHOetal,JADetal,KIMPARetal}. In this type of robots, layer jamming -- consisting of a laminate of flexible strips or sheets -- is installed throughout the continuum robot's body, and winded up into a tube sheath, as illustrated in Fig. \ref{fig:jamming}. The jamming sheath forms an enclosed structure in which we may apply a negative pressure $-u_{\tt P} \le 0$ (relative to the atmospheric pressure) \cite{FANetal}. As a result, friction between strips or sheets would increase dramatically, thus changing the robot stiffness and dissipating energy \cite{NARetal}.

In our previous work \cite{YIetal23}, we operate the robot in its default state (i.e. $u_{\tt P}=0$) that behaves highly compliant, without accounting for the effects of negative pressure.
%\blue{In recent work \cite{YIetal23}, the robot was operated in its default state (i.e. $u_{\tt P}=0$) that behaves highly compliant, and there is no accounting for the effects of negative pressure.}
As discussed above, the pressure value $\up \in \rea_{\ge 0}$ can be adjusted in real-time and viewed as an additional input that changes the robotic dynamics. This paper aims to develop a control-oriented dynamic model for continuum robots with layer jamming. For control purposes, it is essential to have simplified models that capture the key characteristics of LJ-based continuum robots. In particular, when the pressure $\up =0$, the proposed model should reduce to the LJ-free model in Section \ref{sec:21}.

Intuitively, when adjusting of the vacuum pressure $u_{\tt P}$, it is reasonable to expect that some functions in the model depends on the pressure $u_{\tt P}$. To explore such a relationship, firstly we discuss the dependence of the plant parameters $\alpha_i$ ($i=1,2$) and the function $D(\cdot)$ on $u_{\tt P}$. The following assumptions are introduced to formalize this dependence.

\begin{assumption} \label{assm:1}
During the variation process of $\up$, the continuum robot satisfies:
\begin{itemize}
\item[\bf (a)] The mass change of air in the airtight membrane is negligible. Hence, the gravitational parameter $\alpha_1$ is constant and thus \emph{independent} of the pressure $\up$.

\item[\bf (b)] The elastic coefficient $\alpha_2>0$ is a function of the pressure $\up$. For simplicity, we write it as $\alpha_2(\up)$.   
\end{itemize}
\end{assumption}

\begin{assumption}
\label{assm:D}
The energy dissipation of the robot is only derived from the lumped friction torque with the damping matrix $D(q) =0$.
\end{assumption}

\begin{remark}\rm
In Assumption \ref{assm:1}, it is reasonable to consider constant mass during inflation and deflation. Now, let us briefly illustrate the underlying reason for considering a pressure-dependent function $\alpha_2(\up)$ in the point (b). In \cite{NARetal}, the \emph{elastic modulus} of the jamming structure has been quantitatively demonstrated to exhibit distinct values in the vacuum-on and the vacuum-off states by a factor of $k_n^2$, where $k_n$ represents the number of layers. This implies that $U_{\tt E}$ is determined by the pressure $\up$ for a given configuration. Consequently, the elastic coefficient $\alpha_2$ in the proposed model should be a function of $\up$. This point can be experimentally validated, and the identification of the nonlinear function $\alpha_2(\up)$ will be studied in the subsequent sections.
\end{remark}

Under the above assumptions, the model \eqref{eq:model} of the LJ-based continuum robot can be compactly written as
\begin{equation}
\label{eq:s1}
\Sigma_{r}: \quad 
\left\{
\begin{aligned}
\dot x & = J  \nabla H(x) + G_{r}(x)u - G_{f} \tau_{f}
\\
v & = G_f^\top \nabla H(x)
\end{aligned}
\right.
\end{equation}
with the new variable $x:= \col(q,p)$ and 

\begin{equation*}
\begin{aligned}
    G_{r} = \begmat{~0_{n\times m}~\\ G(q)}, ~ 
    G_{f} = \begmat{~0_{n\times n}~\\ I_n},~  
    J = \begmat{~0_{n\times n} & I_n \\ - I_n  & 0_{n\times n}~},
\end{aligned}
\end{equation*}
where $\tau_{f} \in \rea^n$ is the lumped frictional torque acting in the links. If we view the friction vector $\tau_f$ as the ``input'', then the \emph{passive output} $v \in \rea^n$ is, indeed, the generalized velocity \cite{ORTetalBOOK,VAN}, i.e.,
\begin{equation}
\label{eq:v}
	v = M^{-1}(q) p.
\end{equation}

The jamming phenomenon, indeed, arises from the distributed friction along the layers, and the remaining task for modeling boils down to studying the frictional effects from $\tau_f$ and its interconnection to the system $\Sigma_r$.  

To take this behavior into account in the model, we adopt the LuGre friction model which was proposed in \cite{DEWetal}. The LuGre model is a \emph{dynamical} model that effectively describes various frictional properties, such as zero-slip displacement (a.k.a. micromotion), slick-slip motion, invariance, state boundedness, and passivity \cite{ASTDEW}. Later, we will demonstrate that this model is also suitable for the dynamic modeling of LJ-based continuum robots.

Before presenting our modeling approach, we make the following assumption about the (lumped) normal force $F_n >~0$ between the jamming layer surfaces.

\begin{assumption}
\label{assm:2}
The pressure along the jamming layer is uniformly distributed with a value $(-\up)$ and is proportional to some function of the lumped normal force, i.e., $F_n~\propto~\phi(\up)$, where $\phi: \rea^n \to \rea^n$ is yet to be determined. 
\end{assumption}

To facilitate the analysis in the subsequent sections, we adopt the port-Hamiltonian form of the LuGre model, as introduced in \cite{KOOetal}:
\begin{equation}
\label{eq:LuGre}
\Sigma_{z}: \quad
\left\{
\begin{aligned}
\dot z &= - R_{z}(v) \nabla H_{z}(z) + [\caln(v)-\calp(v)]v
\\
\tau_f & = [\caln(v)+\calp(v)]^\top \nabla H_z(z) + Sv,
\end{aligned}
\right.
\end{equation}
where $z \in \rea^n$ represents the virtual bristle deflection at each (virtual) joint, $v\in \rea^n$ is the input -- the relative generalized velocity of the surfaces in contact given by \eqref{eq:v}, and the output $\tau_f\in \rea^n$ is the frictional torque. The mappings in $\Sigma_z$ include the virtual bristle potential energy
\begin{equation}
\label{eq:Hz}
H_z(z) = \hal \sigma_0\phi(\up) |z|^2,
\end{equation}
the damping matrix 
\begin{equation}
\label{eq:Rz}
\begin{aligned}
R_z(v) & ~=~ \diag(\beta_1(v), \ldots, \beta_n(v))
\\
\beta_i(v) &~:=~ {|v_i|\over \phi(\up)} \rho(v_i),\quad i \in \bar n 
\end{aligned}
\end{equation}
the state-modulation matrices
\begin{equation}
\label{eq:mappings}
\begin{aligned}
	\caln(v) &~:=~  I_n - \hal \sigma_1 \phi(\up) R_z(v)
	\\
	\calp(v) &~:=~ - \hal \sigma_1 \phi(\up) R_z(v)
	\\
	S &~:=~ (\sigma_1 + \sigma_2) \phi(\up) I_n,
\end{aligned}
\end{equation}
and the function
\begin{equation}
\label{eq:rho}
\rho(v_i) = \mu_C + (\mu_S -\mu_C) \exp\left(-\left|{v_i\over v_s}\right|^{\sigma_3} \right).
\end{equation}
Physical meanings of coefficients in the above model are summarized in Table \ref{tab:coeff}. The interested reader may refer to \cite{DEWetal,BARORT,KOOetal} for additional details. The above provides an energy-based modeling approach to model the dynamical behaviors caused by frictions in the LJ-based continuum robot.

\begin{table}[!htp]
\begin{center}
\caption{List of Coefficients in the LuGre Model}
\label{tab:coeff}
%\vspace{3cm}
\small
\begin{tabular}{r|l}
\hline
$\mu_S$ & Stiction friction coefficient
\\
$\mu_C$ & Coulomb friction
\\
$v_s$ & Stribeck velocity
\\
$\sigma_0$ & Bristle stiffness coefficient
\\
$\sigma_1$ & Bristle damping coefficient
\\
$\sigma_2$ & Viscous friction coefficient
\\
$\sigma_3$ & Curve parameter (further tuning of the Stribeck effects)
\\
\hline
\end{tabular}
\end{center}
\end{table}

Note that in the model presented above, there is a need to identify the unknown function $\phi$ in our continuum robot. From certain physical considerations, we have a priori knowledge of the properties of the function $\phi$, which are summarized in the following assumption.
\begin{assumption}
\label{assm:phi}
The function $\phi$ is smooth and satisfies:
\begin{itemize}
    \item[(a)] It is positive definite, i.e., $\phi(\up) \ge 0$ for all $\up\ge 0$.
    \item[(b)] The function is monotonically increasing with respect to $\up$. 
\end{itemize}
\end{assumption}
The function can be obtained either from mechanisms or data-driven methods. This will be further investigated in the subsequent sections. 

\begin{remark}\rm
The model $\Sigma_z$ is well-posed for all $\phi(\up) \ge~0$ even though $\phi(\up)$ appears in the denominator of the function $\beta_i$ in \eqref{eq:Rz}. This is due to the product $R_z(v) \nabla H_z$ in the dynamics and $\nabla H_z$ being linear in $\phi(\up)$. If the function $\phi(\up) =0$, we have $\tau_f =0$ for which we roughly regard zero friction injected to the robotic mechanical dynamics $\Sigma_r$. The friction torque $\tau_f$ at the steady-state stage becomes 
$$
\tau_f^{\tt ss} = [ \diag\{\rho(v_i)\}\sign(v) + \sigma_2v ]\up,
$$
with $\sign(v) := \col(\sign(v_1), \ldots, \sign(v_n))$ collecting the signs of $v_i$.
\end{remark}

\begin{remark}\rm
The LuGre model has the boundedness property for the internal state, i.e., the set $\cale_z:= \{z\in \rea^n: ~|z| \le {\mu_S \over \sigma_0}\}$ \cite{ASTDEW}. The input-output pair $(v, \tau_f)$ satisfies the particularly appealing \emph{passivity} property 
$$
\int_0^t v^\top(s) \tau_f(s) ds \ge   H_z(z(t)) - H_z(z(0)), \quad \forall t\ge0
$$
if the coefficients satisfy the inequality constraints \cite{BARORT}:
\begin{equation}
\label{sigma12}
    \sigma_2 > \sigma_1 {\mu_S - \mu_C \over \mu_C}.
\end{equation}
These properties are relevant in the context of LJ-based continuum robots.
\end{remark}

\subsection{Overall Dynamical Model for LJ-based Continuum Robots}
\label{sec:23}
In this subsection, we propose the overall dynamical model for LJ-based continuum robots, which is the negative interconnection of $\Sigma_r$ and $\Sigma_z$. 

For convenience, we define the full systems state as

\begin{equation}
    \chi:= \begmat{~q~ \\ ~p ~\\ ~z~} \in \rea^{3n}.
\end{equation}
Its dynamics can be compactly written in the port-Hamiltonian form
\begin{equation}
\label{eq:overall}
\dot\chi = [\calj - \calr] \nabla\calh + \calg(\chi)u
\end{equation}
with the total Hamiltonian 
% \begin{equation}
% \label{eq:calh}
% \begin{aligned}
% \calh(\chi, \up) & ~:=~  H(q,p) + H_z(z, \up)
% \\
% & ~=~ \underbrace{\hal p^\top M^{-1}(q) p}_{\mbox{kinematic energy}} +  \underbrace{\hal \sigma_0 \phi(\up) |z|^2 + U(q)}_{\mbox{potential energy}}
% \end{aligned}
% \end{equation}
\begin{equation}
\label{eq:calh}
\begin{aligned}
& \calh(\chi, \up)  ~:=~  H(q,p) + H_z(z, \up)
\\
 & ~=~ \underbrace{\hal p^\top M^{-1}(q) p}_{\mbox{kinematic energy}} +  \underbrace{\hal \sigma_0 \phi(\up) |z|^2 + U(q)}_{\mbox{potential energy}}
\end{aligned}
\end{equation}
and the matrices
\begin{equation}
\label{eq:ovjr}
\begin{aligned}
    \calj(\chi, \up) & ~:=~ \begmat{J & - G_f \caln^\top~ \vspace{0cm}
    \\ ~\caln G_f^\top & 0_{n\times n} } 
    \\
    \calr(\chi, \up) & ~:=~ \begmat{~G_f S(v) G_f^\top & G_f \calp^\top ~\vspace{0cm}
    \\ \calp^\top G_f^\top & R_z}
    \\
    \calg(\chi) &~:=~ \begmat{G_r \\ 0_{n\times m}}.
\end{aligned}
\end{equation}
Note that $\caln, \calp$ and $S$ are linear functions of the pressure $\up$. The overall model has an $(m+1)$-dimensional input 
$$
u_\chi = \begmat{~u~ \\ ~\up~}
$$
with all the elements non-negative. 

\begin{remark}
\label{rem:R_wlps}\rm
The damping matrix $\calr$ can be decomposed as $\calr = \diag(0_{n\times n}, \calr_{\tt pz})$ with 
\begin{equation*}
    \calr_{\tt pz} := \begmat{(\sigma_1 + \sigma_2)\phi(\up) I_n & -\hal \sigma_1 \phi(\up) R_z(v) \vspace{.2cm}
    \\ 
    -\hal \sigma_1 \phi(\up) R_z^\top(v) & R_z(v)}.
\end{equation*}
It is shown in \cite[Sec. 4]{KOOetal} that positive semi-definiteness of $\calr$ can be guaranteed under the coefficient constraint \eqref{sigma12}. This makes $\calr$ qualified as a \emph{damping} matrix. 
\end{remark}

%\section{Interpretation to Key Phenomena in LJ-based Continuum Robots}
\section{Interpretation to Key Phenomena in LJ-based Continuum Robots} \label{sec:3}
In this section, we theoretically verify that the model proposed in Section \ref{sec:23} can interpret the two phenomena in LJ-based continuum robots -- shape locking and adjustable stiffness.

\subsection{Phenomenon 1: Shape Locking} \label{sec:31}
Shape locking is one of the important capabilities of LJ structures when applied to continuum robots \cite{ZHAetalBB,NARetal,SANetal}. Tensions along the cables can change the robot's configuration from its undeformed shape; when a vacuum with negative pressure $(-\up)$ is applied before the release of tension actuation, the continuum robot is able to preserve its current shape. This phenomenon is known as shape locking. In this subsection, we aim to illustrate that shape locking can be characterized by the proposed model. 

Firstly, we mathematically formulate the shape locking behavior described above, as defined below.

\begin{definition}
\label{def:ShapeLock} (\textit{Shape Locking}) Consider the LJ-based continuum robotic model with zero input $u$. If the deformed configuration $\bar q \neq 0_3$ guarantees the set $\cale_{\tt SL}:=\{(q,p,z) \in \rea^{3n}: q= \bar q , ~ p = 0_3\}$ under $\up >0$ forward invariant, i.e.,
\begin{equation}
\chi(0) \in \cale_{\tt SL} ~ \implies ~ [~\dot q(t) = 0,~\dot p(t) =0, ~\forall t\ge 0~],
\end{equation}
then we call this invariance as shape locking.
\end{definition}

The definition above provides a mathematical characterization of the shape-locking phenomenon. Intuitively, under a certain negative pressure $(-\up)$, a continuum robot maintains a deformed configuration $\bar q \neq 0_3$ over time. This behavior can be experimentally observed in LJ-based continuum robots.

The following proposition gives a theoretical analysis of the shape-locking phenomenon using the proposed model. A more detailed illustration will follow immediately after its proof.

\begin{proposition}
\label{prop:ShapeLocking}
Consider the LJ-based continuum robot model \eqref{eq:overall} without external input, i.e., $u=0_m$. For arbitrary configuration $q_a \in \rea^n$ and a \emph{constant} pressure $\up>0$,
\begin{itemize}
    \item[\bf (a)] There exists a vector $z_a \in \rea^n$ such that $(q_a, 0_n, z_a)$ is an equilibrium;
    \item[\bf (b)] The equilibria manifold 
$$
 \calm := \{(q,p,z)\in \rea^{3n}: p =0, ~\nabla U(q) = \sigma_0 \phi(\up) z\}
$$
is locally asymptotically stable.
\end{itemize}
\end{proposition}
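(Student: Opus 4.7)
My plan is to handle part (a) by direct substitution into \eqref{eq:overall} and part (b) by combining an energy argument with LaSalle's invariance principle, with Lyapunov stability of the manifold being the delicate piece. For part (a), at a candidate $\chi_a = (q_a, 0_n, z_a)$ one has $v = M^{-1}(q_a)\cdot 0 = 0$, which by \eqref{eq:Rz}-\eqref{eq:mappings} forces $R_z(0) = 0$, $\caln(0) = I_n$, and $\calp(0) = 0$. Plugging these into \eqref{eq:overall} with $u = 0_m$ kills the $\dot q$- and $\dot z$-components automatically, while the $p$-component collapses to $\dot p = -\nabla U(q_a) - \caln^\top(0)\nabla H_z(z_a) = -\nabla U(q_a) - \sigma_0 \up z_a$. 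Equating to zero determines $z_a$ uniquely (up to the sign convention used in the statement of $\calm$); as $q_a$ ranges over $\rea^n$ this parametrizes $\calm$, so $\calm$ is precisely the equilibria set of \eqref{eq:overall}.

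For part (b), I would take the total Hamiltonian $\calh$ from \eqref{eq:calh} as the Lyapunov candidate. Skew-symmetry of $\calj$ gives along trajectories
\begin{equation*}
\dot\calh = -\nabla\calh^\top \calr(\chi,\up)\nabla\calh,
\end{equation*}
which is nonpositive in a neighborhood of $\calm$ because Remark \ref{rem:R_wlps} guarantees $\calr \succeq 0$ whenever $|v|$ is small enough. The coercive form of $U$ in \eqref{eq:UgUe} together with $H_z$ in \eqref{eq:Hz} makes sublevel sets of $\calh$ compact, so LaSalle's principle applies on a neighborhood of $\calm$. On the largest invariant subset one has $\calr\nabla\calh = 0$; a local analysis of $\calr_{22}$ near $\calm$ (where $R_z \to 0$ while the $(1,1)$-block $(\sigma_1+\sigma_2)\up I_n$ stays strictly positive) forces $v \equiv 0$, hence $p \equiv 0$. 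Invariance of $\{p=0\}$ then imposes $\dot p = -\nabla U(q) - \sigma_0 \up z \equiv 0$, which is exactly the algebraic description of $\calm$; this delivers local attractivity.

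The main obstacle is Lyapunov stability of the manifold itself, since $\calh$ is not constant on $\calm$ and cannot on its own certify stability of the whole set. My plan is to supplement the LaSalle argument with a pointwise shifted-Hamiltonian (Bregman-type) function
\begin{equation*}
V_{\chi_a}(\chi) := \calh(\chi) - \calh(\chi_a) - \nabla\calh(\chi_a)^\top(\chi - \chi_a), \quad \chi_a \in \calm,
\end{equation*}
whose Hessian at $\chi_a$ equals $\diag(\nabla^2 U(q_a),\,M^{-1}(q_a),\,\sigma_0\up I_n)$ and is positive definite locally thanks to \eqref{eq:UgUe}. Establishing $\dot V_{\chi_a} \le 0$ on a small neighborhood requires controlling the indefinite cross term $\nabla\calh(\chi_a)^\top\calj\nabla\calh$ by the strictly dissipative part of $\calr$; an equivalent route is to linearize \eqref{eq:overall} at $\chi_a$ and verify $n$ neutral eigenvalues tangent to $\calm$ together with $2n$ eigenvalues in the open left half-plane. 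Combined with LaSalle-based attractivity, this yields local asymptotic stability of $\calm$.
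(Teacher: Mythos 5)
Your part (a) and the attractivity half of part (b) follow the paper's own route: direct substitution at $v=0$ (using $R_z(0)=0$, $\caln(0)=I_n$, $\calp(0)=0$) to get the equilibrium, and then $\dot\calh=-\nabla\calh^\top\calr\,\nabla\calh\le 0$ on $B_\varepsilon(\calm)$ together with LaSalle. (The sign you flag is a genuine internal inconsistency of the paper's interconnection conventions, not an error of yours.) The concrete gap in this half is your identification of the largest invariant set. Writing $\calr_{22}\col(\nabla_p\calh,\nabla_z\calh)=0$ componentwise, the first row gives $(\sigma_1+\sigma_2)v_i=\hal\sigma_1\sigma_0\,|v_i|z_i/\rho(v_i)$, and since $z$ is \emph{not} small on $\calm$ (the manifold is unbounded in $z$), the observation that $R_z(v)\to 0$ while the $(1,1)$-block stays strictly positive does not force $v_i=0$: any $z_i$ with $|z_i|=2(\sigma_1+\sigma_2)\rho(v_i)/(\sigma_1\sigma_0)$ is compatible with $v_i\neq 0$. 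You must also use the second row, which for $v_i\neq 0$ pins $z_i=\tfrac{\sigma_1}{2\sigma_0}v_i$; the two together yield $|v_i|=4(\sigma_1+\sigma_2)\rho(v_i)/\sigma_1^2\ge 4(\sigma_1+\sigma_2)\mu_C/\sigma_1^2>0$, which is impossible for $\varepsilon$ small. This is precisely the paper's case analysis around \eqref{eq:a1}--\eqref{eq:a2} (cases (i), (ii-1), (ii-2)), and it cannot be compressed into a statement about the $(1,1)$-block alone.

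On the second half: you are right that $\dot\calh\le 0$ by itself does not certify Lyapunov stability of the non-compact set $\calm$, on which $\calh$ is not constant --- the paper simply asserts this step --- but neither of your proposed remedies is carried out, and both face obstacles. For the Bregman function $V_{\chi_a}$, the cross term $-\nabla\calh(\chi_a)^\top\calj\,\nabla\calh$ is sign-indefinite and of first order in $\chi-\chi_a$, while the available dissipation $\nabla\calh^\top\calr\,\nabla\calh$ is only second order and only semidefinite, so $\dot V_{\chi_a}\le 0$ does not follow; moreover $\nabla^2 U(q_a)=\alpha_1\cos(q_{\Sigma})\mathbf{1}_n\mathbf{1}_n^\top+\Lambda$ need not be positive definite for $q_a$ away from the origin. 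For the linearization route, the map $v\mapsto R_z(v)z=\diag\{|v_i|/(\up\rho(v_i))\}\,z$ is not differentiable at $v=0$ whenever $z_a\neq 0$, so the Jacobian does not exist classically at generic points of $\calm$; and even granting a linearization, concluding stability of a manifold of equilibria from ``$n$ neutral plus $2n$ stable eigenvalues'' requires a reduction/normal-hyperbolicity argument, not eigenvalue counting. As submitted, the stability part of (b) is a plan rather than a proof; to match the paper you should either restore its explicit invariant-set computation and accept its (admittedly terse) stability claim, or genuinely close the set-stability step you have correctly identified as delicate.
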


\begin{proof}
First, let us verify the existence of $z_a$ such that $(q_a, 0_n, z_a)$ is an equilibrium. From \eqref{eq:v}, $p=0$ implies the velocity $v=0$, thus
\begin{equation*}
    \dot q = \nabla_p H = M^{-1}(q)p =0.
\end{equation*}
The dynamics of $z$ is given by
\begin{equation*}
\begin{aligned}
    \dot z ~ =~ - R_z(0)\nabla H_z + [\caln - \calp]v \Big|_{v=0} ~=~ 0
\end{aligned}
\end{equation*}
where we have used the fact $R_z(0) = 0$ from \eqref{eq:Rz}. For the generalized momentum, we have the following
\begin{equation}
\begin{aligned}
     \dot p   ~=~& -  {\partial \over \partial q} \left\{\hal p^\top M^{-1}(q)p \right\} - \nabla U(q_a) + Sv \\ 
     & + [\caln+\calp]\nabla H_z \Big|_{p=0}
     \\
      ~=~& -\nabla U(q_a) + [I_n-\sigma_1 \phi R_z(v)]\sigma_0 \phi z
     \\
      ~=~ & -\nabla U(q_a) + \left(I_n - \sigma_1 \diag\left\{{|v_i|\over \rho(v_i)} \right\} \right)\bigg|_{v=0}\sigma_0 \phi z
      \\
     ~=~ & - \nabla U(q_a) + \sigma_0 \phi z,
\end{aligned}
\end{equation}
in which we have simply written $\phi(\up)$ as $\phi$. Invoking Assumption \ref{assm:phi}, for any non-zero $\up$, we conclude that the point 
% $$
% \chi_\star := \begmat{q_a\\ 0_n\\ z_a}
% $$
$$
\chi_\star := \begmat{~q_a~ \\ ~0_n~ \\ ~z_a~}
$$
with
\begin{equation}
    z_a:= {1\over \sigma_0 \phi(\up)}\nabla U(q_a)
\end{equation}
is an equilibrium. 

The next step of the proof is to show the local asymptotic stability of the manifold $\calm$. Calculating the time derivative of the overall Hamiltonian $\calh$, it yields for $\chi \in B_\varepsilon(\calm)$ with a small $\varepsilon>0$, 
\begin{equation}
\begin{aligned}
    \dot\calh & ~=~ - [\nabla\calh(\chi,\up)]^\top \calr(\chi,\up) \nabla \calh(\chi,\up)
    \\
    & ~\le~ - \left\|\begmat{\nabla_p \calh \\\nabla_z \calh}\right\|_{\calr_{22}}^2
    \\
    & ~\le~ 0,
\end{aligned}
\end{equation}
in which we have used the fact that in $B_\varepsilon(\calm)$ the matrix $\calr$ is positive semidefinite from Remark \ref{rem:R_wlps}. Thus, in the neighborhood of the manifold $\calm$, the system is Lyapunov stable. In the set 
\begin{equation}
\{\chi\in\rea^{3n}: \|\col(\nabla_p \calh, \nabla_z \calh)\|_{\calr_{22}} = 0\},
\end{equation}
we need to verify
\begin{align}
(\sigma_1 + \sigma_2) M^{-1}(q)p - \hal \sigma_0 \sigma_1 \phi(\up) R_z(v) z & ~=~ 0
\label{eq:a1}\\
-\hal\sigma_1 R_z(v) M^{-1}(q)p + \sigma_0 R_z(v) z & ~=~0. \label{eq:a2}
\end{align}
Let's first consider \eqref{eq:a2}. There are two possible cases:
\begin{itemize}
    \item case (i): $R_z(v) =0$ (or equivalently $p =0$).
    \item case (ii): For some $j\in \bar n$, $\beta_j(v) \neq 0$, and thus
    \begin{equation}
    \label{eq:M-1p}
        M^{-1}(q) p = 2 {\sigma_0 \over \sigma_1}z.
    \end{equation}
\end{itemize}
For case (i), the trajectory verifies $p(t) \equiv 0$, thus 
\begin{equation*}
    \dot p = - \nabla U(q) + \sigma_0 \phi(\up) z =0,
\end{equation*}
which is exactly the manifold $\calm$. For case (ii), we substitute \eqref{eq:M-1p} into \eqref{eq:a1}, resulting in 
\begin{equation}
\label{eq:a3}
   4(\sigma_1 + \sigma_2) z  =  \sigma_1^2\beta_j(v) \phi(\up) z.
\end{equation}
There are two sub-cases: case (ii-1) $z=0$ and case (ii-2) $z\neq 0$. For case (ii-1), the trajectory should guarantee $z\equiv 0$ and thus 
\begin{equation*}
\begin{aligned}
    \dot{z} &~=~ - \calr_z(v) \nabla H_z(0) + [\caln - \calp ]v \Big|_{v\neq 0} 
    \\
    &~=~ [\caln - \calp ]v \Big|_{v\neq 0} =0.
\end{aligned}
\end{equation*}
Since $\caln - \calp = I_n$, it contradicts with $v \neq 0$ in case (ii). Thus, there is no feasible trajectory. For case (ii-2), the equation \eqref{eq:a3} can be rewritten as
% \begin{equation}
% \label{eq:sigma}
%        \sigma_1 + \sigma_2  =  \sigma_1^2 {|v_j|\over 4\rho(v_j)}.
% \end{equation}
\begin{equation}
\label{eq:sigma}
       \sigma_1 + \sigma_2  =  {\sigma_1^2 |v_j|\over 4\rho(v_j)}.
\end{equation}
Note that $\lim_{|v|\to 0}\rho(v_j) = \mu_C$. For given coefficients $\sigma_1, \sigma_2$, \eqref{eq:sigma} does not admit any feasible solution for a sufficiently small $\varepsilon>0$.
Therefore, all the feasible solutions in $B_\varepsilon(\calm)$ are all on the equilibria manifold $\calm$.

The system is autonomous since we consider constant pressure $\up$. As we have shown above, $\calm$ is the largest invariant set in the neighborhood $B_\varepsilon(\calm) \subset \rea^{3n}$. Since the overall dynamics is autonomous, applying the LaSalle's invariance principle \cite[Sec. 3]{KHAbook} we have that the manifold $\calm$ is locally asymptotically stable.  
\end{proof}

%\vspace{.2cm}

\begin{remark}
\label{rem:long}\rm
The above proposition shows that 
\begin{itemize}
    \item[(i)] If the initial condition $\chi(0)$ starts from any configuration $q_a$ and zero momentum $p(0)=0$, we may always find a virtual bristle vector $z_a$ such that the continuum robot's configuration maintains at the initial values over time, and we also note $\calm \subset \cale_{\tt SL}$. In this way, it achieves shape locking. 
    %====
    \item[(ii)] A more realistic scenario is that the continuum robot achieves deformation with the tension input $u \in \rea^m$; then we apply a vacuum and release the actuator. Once completing the tension release, the initial condition is given by $\chi(0) =(q(0), 0_3, 0_3)$ instead of $(q_a, 0_3, z_a)$. Proposition \ref{prop:ShapeLocking}(b) shows the \emph{local} asymptotic stability of the manifold $\calm$, which means if the initial distance
    \begin{equation}
        d(\chi(0), \calm) := \inf_{\chi' \in \calm } |\chi' - \chi(0)| < \varepsilon_0
    \end{equation}
    is sufficiently small, the trajectory ultimately converges to equilibrium $(q_a, 0_3, z_a) \in \calm$.
    
    \item[(iii)] From (ii), the convergence only happens when \mbox{$\varepsilon_0>0$} is sufficiently small. Note that the vector $z_a$ is parameterized as $z_a = {1\over \sigma_0 \phi(\up)}\nabla U(q_a)$. Thus, a large value of $\up$ can impose the initial condition $(q(0), 0_3, 0_3)$ in a small neighborhood of $\calm$; see Fig.~\ref{fig:manifold} for an intuitive illustration. \emph{Physically, it means that a large pressure value $\up$ is capable of achieving shape locking.}

    \item[(iv)] The above item shows that after releasing the actuation, the system will change from the initial configuration $(q(0), 0_3, 0_3)$ to the new equilibrium $(q_a, 0_3, z_a)$, and it will be closed to each other with a high pressure $\up$. It means when the continuum robot changes from flexible to stiff, we may observe a \emph{tiny} positional change that has been experimentally verified in \cite[Sec. III-B]{CLAROJ}. In Section \ref{sec:5}, we further support this point with experimental validations.
    \begin{figure}[!htp]
        \centering
        \includegraphics[width = 0.85\linewidth]{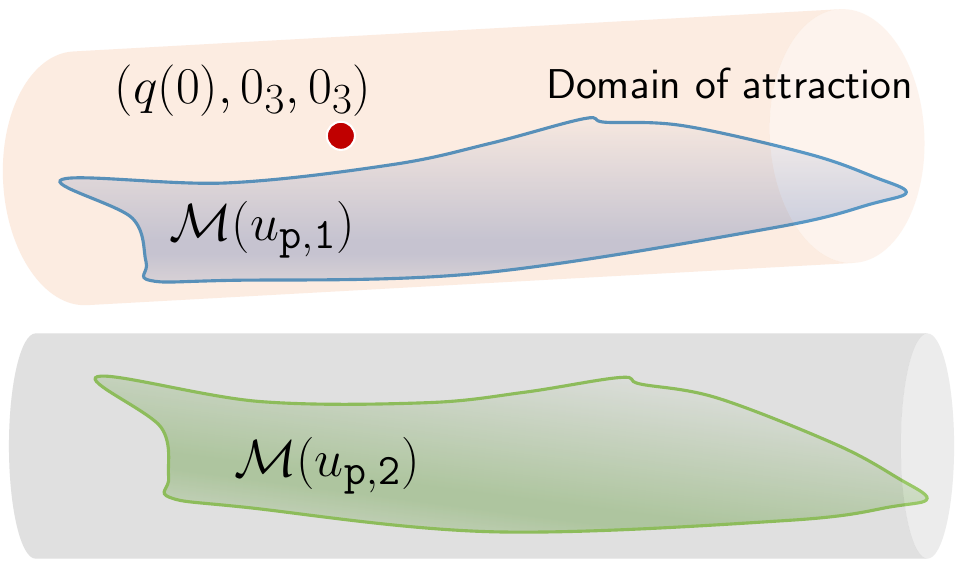}
        \caption{An illustration of the initial condition and the equilibria manifold $\calm$. For a given initial condition $(q(0), 0_3,0_3)$, a larger $u_{\tt p,1}$ implies a smaller distance from $\chi(0)$ to $\calm$, thus $\chi(0)$ located in its domain of attraction; a smaller $u_{\tt p,2}$ may cause the initial condition outside the domain of attraction, failing to achieve shape locking.}
        \label{fig:manifold}
    \end{figure}
\end{itemize}
\end{remark}

\subsection{Phenomenon 2: Adjustable Open-loop Stiffness} \label{sec:32}
When there is no external control input, the open-loop equilibrium $\chi_\star:=(q_\star, p_\star, z_\star)$ for the overall dynamics is the origin. For the stiffness analysis, we assume that there is an external torque $\tau_{\tt ext}$ acting on the dynamics of $p$, i.e., the dynamics with $u=0$ becomes 
\begin{equation}
\label{dot_bar_chi}
 \dot {\bar \chi} = [\calj - \calr] \nabla \calh + G_0 \tau_{\tt ext}.
\end{equation}
with 
% $$
% G_0= \begmat{~0_{3\times 3} ~\\ ~I_3~ \\ ~0_{3\times 3}~},
% $$
$
G_0= \begmat{~0_{3\times 3} ~ ~I_3~ ~0_{3\times 3}~}^\top,
$
under which there is a shifted equilibrium $\bar\chi:=\col(\bar q , 0, \bar z)$.

Throughout the paper, we consider the stiffness in the following sense.
\begin{definition}
\label{def:stiff}  (\textit{Stiffness})
Assume that we can find a positive semidefinite $K \in \rea^{3\times 3}_{\succeq0}$ such that
\begin{equation}
\label{tau_ext}
    \tau_{\tt ext} := K(\bar q - q_\star) 
\end{equation}
solves \eqref{dot_bar_chi}-\eqref{tau_ext}. When taking $\bar q \to q_\star$ and $\bar z \to z_\star$, if the limit of $K$ exists, we call $K$ the overall stiffness.
\end{definition}

Another key phenomenon in LJ-based continuum robots is the ability to tune the open-loop stiffness over a wide range by adjusting the negative pressure $(-\up)$. The following proposition demonstrates that the proposed dynamical model effectively captures this open-loop stiffness adjustability in LJ-based continuum robots.

\begin{proposition}
\label{prop:stiffness}
Consider the LJ-based continuum robotic model \eqref{eq:overall}. Its overall stiffness in the sense of Definition \ref{def:stiff} at the open-loop equilibrium $\chi_\star$ is given by
\begin{equation}
\label{eq:K}
\begin{aligned}
K = \alpha_1 \mathbf{1}_{n\times n} + [\alpha_2 + \sigma_0 \phi(\up) ]I_n,
\end{aligned}
\end{equation}
with $\mathbf{1}_{n\times n} \in \rea^{n\times n}$ denoting an all-ones matrix.
\end{proposition}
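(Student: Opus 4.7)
The plan is to extract $K$ from the $\dot p = 0$ branch of the shifted equilibrium and close the system with a kinematic coupling $\bar z - z_\star = \bar q - q_\star$ forced by the LuGre model in the stiction regime. First I would substitute $\bar\chi = \col(\bar q, 0, \bar z)$ into \eqref{dot_bar_chi}. Since $p = 0$ gives $v = M^{-1}(\bar q) p = 0$, and since $R_z(0) = 0$ and $\caln(0) - \calp(0) = I_n$, the rows for $\dot q$ and $\dot z$ are automatically satisfied, so the only nontrivial condition --- obtained exactly as in the proof of Proposition \ref{prop:ShapeLocking} --- is an algebraic identity of the form
\begin{equation*}
\tau_{\tt ext} = \nabla U(\bar q) + \sigma_0 \up \bar z.
\end{equation*}
Specializing to $\chi_\star$ with $\tau_{\tt ext} = 0$ fixes the base-point identity $\nabla U(q_\star) + \sigma_0 \up z_\star = 0$.

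Second, because $\dot z = 0$ is trivially satisfied at $v = 0$, the identity above by itself does not pin down $\bar z$ for a given $\bar q$, so $K$ would be ill-defined without an additional coupling. I would supply this coupling by linearizing the $q$- and $z$-dynamics about $\chi_\star$: using $R_z(0) = 0$ and $\caln(0) - \calp(0) = I_n$ once more, both equations reduce to
\begin{equation*}
\delta \dot q = M^{-1}(q_\star) \delta p, \qquad \delta \dot z = M^{-1}(q_\star) \delta p,
\end{equation*}
so $\delta q(t) - \delta z(t)$ is conserved along the linearization. A trajectory initiated at the open-loop rest state $\chi_\star$ must therefore satisfy $\delta q \equiv \delta z$, and in the limit $\bar\chi \to \chi_\star$ demanded by Definition \ref{def:stiff} this yields the coupling $\bar z - z_\star = \bar q - q_\star$.

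Third, I would Taylor-expand the first-step identity around $\chi_\star$, subtract the base-point identity, and insert the coupling from the second step to obtain
\begin{equation*}
\tau_{\tt ext} = [\nabla^2 U(q_\star) + \sigma_0 \up I_n](\bar q - q_\star) + o(|\bar q - q_\star|).
\end{equation*}
Matching this with $\tau_{\tt ext} = K(\bar q - q_\star)$ in the limit reads off $K = \nabla^2 U(q_\star) + \sigma_0 \up I_n$, and evaluating $\nabla^2 U$ at $q_\star = 0$ via \eqref{eq:UgUe} --- which yields $\nabla^2 U_{\tt G}(0) = \alpha_1 \mathbf{1}_{n\times n}$ from $U_{\tt G} = \alpha_1[1 - \cos(q_\Sigma)]$ and $\nabla^2 U_{\tt E}(0) = \alpha_2 I_n$ from $U_{\tt E} = \hal q^\top \Lambda q + U_0$ --- produces the stated expression. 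The main subtlety I anticipate is precisely the second step: it is the LuGre micromotion property $\dot z = v$ in the presliding regime that promotes the bristle stiffness $\sigma_0 \up$ to an additive contribution to the overall mechanical stiffness, and without it the proposition would not be well-posed.
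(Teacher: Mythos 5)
Your proposal is correct and follows essentially the same route as the paper: the paper linearizes the Euler--Lagrange form of the interconnected dynamics about the origin (with zero initial condition, so that the presliding LuGre behavior gives $\delta z = \delta q$) and reads off $K=\nabla^2 U(q_\star)+\sigma_0\up I_n$, exactly the computation you perform after splitting it into a static force balance plus the kinematic coupling $\bar z - z_\star = \bar q - q_\star$. The only difference is presentational: you make explicit the micromotion identity $\delta\dot z=\delta\dot q$ and the resulting well-posedness of $K$ (which the paper leaves implicit in its one-shot linearization), which is a genuine clarification rather than a different argument.
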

\begin{proof}
Let us consider a tiny displacement $(\delta q, \delta z)\in \rea^n~\times ~\rea^n$ around $(q_\star, z_\star)$, i.e.
\begin{equation}
\label{eq:dqdz}
\begin{aligned}
q &= q_\star + \delta q \\
z &= z_\star + \delta z.
\end{aligned}
\end{equation}
For ease of analysis, we rewrite the model in an Euler-Lagrangian form 
    \begin{align}
         M(q) \ddot q + C(q,\dot q) \dot q + \nabla U(q) & ~=~ \tau_{\tt ext} - \tau_f \label{eq:EL}
        \\
        \dot z & ~=~ - \sigma_0 \diag \left\{ { |\dot q_i| \over \rho (\dot q_i)} \right\} z + \dot q \nonumber
        \\
        \tau_f & ~=~ (\sigma_0 z + \sigma_1 \dot z + \sigma_2 \dot q) \phi(\up), \nonumber
    \end{align}
with zero initial condition, in which $C(q,\dot q)$ is the Coriolis and Centrifugal term \cite{ORTetalBOOK}. 

Linearizing the dynamics \eqref{eq:EL} around $q_\star =0, \dot q_\star =0$ and $z_\star =0$ and invoking \eqref{eq:dqdz}, we obtain the model
\begin{equation}
\label{eq:linear}
\begin{aligned}
    M_\star \delta \ddot q + [\sigma_1\phi(\up)] \delta \dot q +  [\nabla^2 U(q_\star) &+ \sigma_0\phi(\up)I_3]\delta q \\
    & = \tau_{\tt ext} + \mathcal{O}(\delta q^2)
\end{aligned}
\end{equation}
with 
$$ M_\star := M(q_\star) $$
and high-order remainder term $\calo(\delta q^2)$, in which we have used the facts 
$$
C(q_\star, 0) =0, \quad \nabla U(q_\star) = 0. 
$$
Since 
$$
\begin{aligned}
    \sigma_1 \phi(\up) & ~>~ 0
    \\
    \nabla^2U(q_\star) + \sigma_0 \phi(\up) I_3 & ~\succ~ 0,
\end{aligned}
$$
the linearized dynamics \eqref{eq:linear} is exponentially stable at equilibrium
$$
\delta q = [\nabla^2U(q_\star) + \sigma_0 \phi(\up) I_3 ]^{-1} \tau_{\tt ext} + \mathcal{O}(\delta q^2).
$$
By taking $|\delta q|\to0$, the algebraic equation \eqref{tau_ext} is obtained with $K$ given 
by 
$$
K = \nabla^2 U(q_\star) + \sigma_0 \phi(\up) I_3.
$$
Substituting the function $U$ in \eqref{eq:U} into the above, we obtain \eqref{eq:K} and complete the proof. 
\end{proof}

The above result shows that in the open-loop case, increasing the vacuum pressure value $\up>0$ leads to a corresponding increase in stiffness, with their algebraic relation is given in \eqref{eq:K}. This result is compatible with existing static modeling findings.

\section{Feedback Control}
\label{sec:4}
In this section, we aim to design a feedback controller capable of simultaneously regulating the configuration and stiffness of LJ-based continuum robots in real time.

\subsection{Assumptions and Preliminaries}
Before presenting the controller synthesis, we make some key assumptions that will be used in the subsequent analysis. 

\begin{assumption}
The continuum robot has a piecewise constant curvature, and its actuator dynamics are negligible compared to the robot's overall dynamics. 
\end{assumption}

The above means that the motors in the robot are operated in torque control mode and we have neglected their dynamics. We also suppose that the tensions are distributed uniformly along the cables.%, and the same assumption has been adopted in our previous work \cite{YIetal23}.

\begin{assumption}
\label{assm:uniform}
(\emph{Uniformity}) The input matrix $G(q)$ can be reparamterized as 
\begin{equation}
\label{eq:G}
G(q) ~= ~ g(q) \mathbf{1}_n
\end{equation}
with a $C^1$-continuous scalar function $g(q)$ and $|g(q)|$ is uniformly lower bounded from zero.
\end{assumption}

For underactuated mechanical systems, it is a well-known fact that only a specific class of desired configurations can be made as equilibria through feedback. These are commonly referred to as ``assignable equilibria.'' In our previous work \cite{YIetal23}, we have shown that under Assumption \ref{assm:uniform}, the \emph{homogeneous equilibria} belongs to the set of assignable equilibria for the
proposed continuum robotic model. They can be depicted by the set

% \blue{In recent work \cite{YIetal23}, the \emph{homogeneous equilibria} belongs to the set of assignable equilibria for the proposed continuum robotic model under Assumption \ref{assm:uniform} have been demonstrated.} They can be depicted by the set
\begin{equation}
\cale_\theta: = \{q\in \rea^n : q_i =\theta ,~ \forall i \in \bar n\}
\end{equation}
in which $\theta$ is a constant scalar, and this follows a slight modification of \cite[Proposition 3]{YIetal23}. In the sequel of this section, we will focus on stabilizing a desired configuration $q_\star \in \cale_\theta$.

To facilitate the controller design, we introduce the following \emph{input transformation}:
\begin{equation}
    \tau = T_u  u
\end{equation}
where $\tau\in \rea^2$ is the new input vector and the matrix $T_u$ is given by
\begin{equation}
    T_u := \begmat{1 & -1 \\ 0 & 1}.
\end{equation}
A benefit of the above-mentioned transformation is to avoid the sign constraint for $\tau_1$; see \cite[Sec. IV-A]{YIetal23} for additional details. Then, the dynamics \eqref{eq:overall} becomes
\begin{equation}
\label{eq:overall_tau}
\dot\chi = [\calj - \calr] \nabla\calh + \bar\calg(\chi)\tau
\end{equation}
with 
$$
G_\tau:= G(q) T_u^{-1}
$$ 
and
% \begin{equation}
%     \bar\calg:= \begmat{0_n \\ G_\tau(q) \\ 0_n }.
% \end{equation}
\begin{equation}
    \bar\calg:= \begmat{~0_n~ \\ ~G_\tau(q) ~\\ 0_n~}.
\end{equation}
There are now three inputs, i.e., $\col(\tau, \up) \in \rea^3$, in the model. In the sequel of the paper, we will design a controller based on the transformed model \eqref{eq:overall_tau} to achieve decoupling control of position and stiffness.

\subsection{Decoupling Control of Position and Stiffness}
In this section, we propose a feedback controller to achieve the decoupling control of position and stiffness for LJ-based continuum robots. Our main design is summarized below, and followed by its proof.

\begin{proposition}
\label{prop:control}
Consider the LJ-based continuum robot model \eqref{eq:model} with the coefficients satisfying the constraint \eqref{sigma12} and $\phi(\up) >0$. For any assignable desired configuration $q_\star = \col(\theta_\star, \ldots, \theta_\star)$, the feedback law
\begin{equation}
\label{eq:ctrl1}
\begin{aligned}
    \tau & ~=~ \tau_{\tt es}(q) + \tau_{\tt da}(q)
    \\
    \up & ~=~ \up^\star 
\end{aligned}
\end{equation}
with the functions
% \begin{equation}
% \label{eq:tau}
% \begin{aligned}
% \tau_{\tt es} (q)
% &~=~ 
% {1\over g(q)}\begmat{
% \alpha_1 \sin(q_\Sigma) - \gamma \sin(q_\Sigma - q_\Sigma^\star) + \alpha_2 \theta_\star 
% \\
% 0
% }
% ~
% \\
% \tau_{\tt da}(q)
% &~=~
% - \begmat{  K_{\tt D} G_\tau (q)^\top  \nabla_p H \\ 0},
% \end{aligned}
% \end{equation}

\begin{equation}
\label{eq:tau}
\begin{aligned}
\tau_{\tt es} (q) &=
{1\over g(q)}\begmat{
\alpha_1 \sin(q_\Sigma) - \gamma \sin(q_\Sigma - q_\Sigma^\star) + \alpha_2 \theta_\star 
\\
0
},
\\
\tau_{\tt da}(q) &=
- \begmat{  K_{\tt D} G_\tau (q)^\top  \nabla_p H \\ 0}.
\end{aligned}
\end{equation}
and the gains $\gamma >0$ and $K_{\tt D} \succ 0$ guarantees that
\begin{itemize}
    \item (\emph{configuration regulation}) The desired configuration $q_\star$ is a globally asymptotically stable (GAS) equilibrium of the closed loop, such that the configuration $q(t)$ converges to its reference value $q_\star$ as $t\to\infty$, i.e.
\begin{equation}
\lim_{t\to \infty} |q(t) -q_\star| =0.
\end{equation}

    \item (\emph{adjustable stiffness}) Additionally, the closed-loop stiffness at $q_\star$ is given by
\begin{equation}
\label{eq:K2}
    K = \gamma \mathbf{1}_{n\times n} + [\alpha_2 + \sigma_0 \phi(\up^\star)] I_n
\end{equation}
for a constant pressure $\up^\star \ge 0$.
\end{itemize}

\end{proposition}
%%%%%%%%%%%%%
\begin{proof} 
First, we study the closed-loop stability of the robotic dynamics. The term $\tau_{\tt es}(q)$ is for energy shaping term, and it satisfies the following equation:
\begin{equation}
\label{eq:pde}
[\calj - \calr] \nabla \calh  + \bar \calg \tau_{\tt es}(q) ~=~  [\calj - \calr] \nabla \calh_d  
\end{equation}
with the shaped overall Hamiltonian
\begin{equation}
\label{Hd}
\calh_d(q,p,z) = \hal p^\top M^{-1}(q) p + \hal \sigma_0 \phi(\up) |z|^2 + U_{\tt d}(q),
\end{equation}
the desired potential energy function
\begin{equation}
\label{Ud}
U_{\tt d}(q) = - \gamma \cos(q_\Sigma - q_\Sigma^\star) + {1\over2}\alpha_2 |q - q_\star |^2,
\end{equation}
and the variable
\begin{equation}
    q_\Sigma^\star : = \sum_{i\in \bar n} q_{\star ,i }.
\end{equation}
Since the desired potential energy function $U_{\tt d}$ guarantees
\begin{equation}
\begin{aligned}
    \nabla U_{\tt d}(q_\star) & ~=~ 0 
    \\
    \nabla^2 U_{\tt d}(q_\star) &~ \succ~ 0,
\end{aligned}
\end{equation}
the function $U_{\tt d}$ is locally convex around $q_\star$.\footnote{Since we only shape the closed-loop potential energy function, the proposed design can be viewed as a potential energy shaping passivity-based controller \cite{ORTetal01}.}

By substituting $\tau = \tau_{\tt es} + \tau_{\tt da}$ and the specific expression of $\tau_{\tt es}$ into the open-loop model \eqref{eq:overall_tau}, the closed-loop dynamics now becomes
\begin{equation}
\label{eq:dyn_shaped}
\dot \chi = [\calj - \calr] \nabla \calh_d + \bar \calg(\chi)  \tau_{\tt da}.
\end{equation}
Equivalently, it can be rewritten as
\begin{equation}
\label{eq:cldyn}
\begin{aligned}
\begmat{\dot q \\ \dot p \\ \dot z}
&= 
\begmat{ \begmat{0 & I_n \\ - I_n & - (\sigma_1 + \sigma_2)\phi I_n }  & - G_f \caln^\top - G_f \calp^\top ~
    \\ \caln G_f^\top - \calp G_f^\top & - R_z } ~
\\
& \qquad \qquad \qquad \qquad  \times \begmat{\nabla_q \calh_d \\ \nabla_p \calh_d \\ \nabla_z \calh_d}
+
\begmat{0_n \\ G_\tau(q) \tau_{\tt da} \\ 0_n }.
\end{aligned}
\end{equation}

Selecting the damping injection term in \eqref{eq:tau} and calculating the time derivative of the Hamiltonian $\calh_d$, we have
\begin{equation}
\begin{aligned}
    \dot{\calh}_d & 
= -(\nabla \calh_d)^\top 
    \begmat{    0  & 0 & 0 \\ 0 & R_p + R_{\tt da} & \calp^\top \\ 0 & \calp & R_z } 
  \nabla \calh_d
\\
&  = -
 \begmat{\nabla_p \calh_d \\ \nabla_z \calh_d}^\top 
 \left(
 \underbrace{\begmat{R_p & \calp^\top \\ \calp & R_z}}_{:=R_{\tt pz}}
 + \underbrace{\begmat{ R_{\tt da} & 0 \\ 0 &0} }_{:= {R}_{\tt D}}
 \right)
  \begmat{\nabla_p \calh_d \\ \nabla_z \calh_d} 
\\
& \le 0,
\end{aligned}
\end{equation}
with
\begin{equation}
\begin{aligned}
  R_p & ~:=~  (\sigma_1 + \sigma_2)\phi(\up) I_n 
  \\
  R_{\tt da} & ~:=~ G_\tau (q) K_{\tt D}  G_\tau^\top(q)  .
\end{aligned}
\end{equation}
Following the standard Lyapunov analysis, the system trajectories will ultimately converge to the set 
\begin{equation}
\label{eH}
   \cale_H:=  \left\{ (q,p,z) \in \rea^{3n} : (R_{\tt pz} + R_{\tt D}) \begmat{\nabla_p \calh_d \\ \nabla_z \calh_d}  =0\right\}.
\end{equation}
Invoking the positive definiteness of $\calr_{\tt pz}$ (see Remark \ref{rem:R_wlps}) and positive semidefiniteness of $\calr_{\tt D}$, we conclude that $(\calr_{\tt pz}~+~ \calr_{\tt D})$ is uniformly full rank. Therefore, all the trajectories within the invariant set satisfy 
\begin{equation}
\begin{aligned}
    \begmat{ \nabla_p \calh_d \\ \nabla_z \calh_z } = \begmat{M(q)^{-1} p \\ \sigma_0\phi(\up) z}  = 0.
\end{aligned}
\end{equation}
From the full rankness of $M(q)$ and the assumption \mbox{$\phi(\up)>0$}, we have
\begin{equation}
\label{eq:pz0}
  [\chi(t) \in \cale_H,~ \forall t\ge 0] \quad \implies \quad   \begmat{~p~ \\ ~z~} \equiv 0_{2n},
\end{equation}
where $\cale_H$ is defined in \eqref{eH}.
Then, it yields
\begin{equation}
\begin{aligned}
     \dot p   ~=~& -  {\partial \over \partial q} \left\{\hal p^\top M^{-1}(q)p \right\} - \nabla U_{\tt d}(q) + Sv \\ 
     & + [\caln+\calp]\nabla H_z \Big|_{p=0, z= 0}
     \\
     ~=~ & - \nabla U_{\tt d}(q) ,
\end{aligned}
\end{equation}
and thus
\begin{equation}
\begin{aligned}
 \eqref{eq:pz0} \quad  & \implies \quad   \nabla U_{\tt d}(q) \equiv 0
 \\
 & \implies \quad [\alpha _2\theta_\star - \gamma \sin(q_\Sigma - n\theta_\star) ] \mathbf{1}_n  = \alpha_2 q
 \\
 & \implies \quad  q = q_\star,
\end{aligned}
\end{equation}
in which we have considered the space $q_i \in \calx$. As a result, there only exists an isolated equilibrium 
% $$
% \chi_\star:= \begmat{~q_\star ~\\ ~0_n~ \\ ~0_n~}
% $$
$$
\chi_\star:= \begmat{~q_\star ~\\ ~0_n~ \\ ~0_n~}
$$
in the invariant set $\cale_H$. According to LaSalle's invariance principle \cite[Chapter 4]{KHAbook}, the closed-loop equilibrium $\chi_\star$ is globally asymptotically stable (GAS).

The reminder part is to verify the closed-loop stiffness, which resembles the proof of Proposition \ref{prop:stiffness}. We consider a tiny displacement $(\delta q, \delta z) \in \rea^n \times \rea^n$ in the neighborhood of $(q_\star, 0_n)$. Applying a small external torque $\tau_{\tt ext}$, the configuration will shift to $(q,z)$ such that
\begin{equation}
\begin{aligned}
    q &= q_\star + \delta q 
    \\
    z &= z_\star + \delta z,
\end{aligned}
\end{equation}
satisfying the Euler-Lagrangian equation
    \begin{align}
        M(q) \ddot q + C(q,\dot q) \dot q + \nabla U_{\tt d}(q) & ~=~ \tau_{\tt ext} - \tau_f \label{eq:EL2}
        \\
        \dot z & ~=~ - \sigma_0 \diag \left\{ { |\dot q_i| \over \rho (\dot q_i)} \right\} z + \dot q \nonumber
        \\
        \tau_f & ~=~ (\sigma_0 z + \sigma_1 \dot z + \sigma_2 \dot q) \phi(\up), \nonumber
    \end{align}
from the initial condition $(q_\star, 0_n)$. Similarly, we have the following around $(q_\star, 0_n, n_n)$ 
\begin{equation}
M_\star \delta \ddot q + \sigma_1 \phi \delta \dot q + [\nabla U_{\tt d}(q_\star) + \sigma_0 \phi I_3] \delta q = \tau_{\tt ext} + \mathcal{O}(\delta q^2).
\end{equation}
By substituting $\tau_{\tt ext} =  K\delta q$ and taking $|\delta q|\to 0$, we obtain the overall stiffness 
\begin{equation} \label{open_loop_eq}
\begin{aligned}
    K & ~=~ \nabla^2 U_{\tt d}(q_\star) + \sigma_0 \phi(\up^\star)I_3
    \\
    & ~=~ \alpha_1 \mathbf{1}_{n\times n} + [\alpha_2 + \sigma_0 \phi(\up) ]I_n.
\end{aligned}
\end{equation}
It completes the proof. 
\end{proof}

\subsection{Discussions}

In this subsection, we first discuss the approaches to identify the functions $\alpha_2(\cdot)$ and $\phi(\cdot)$, and then provide some discussions to the proposed controller.

1) {\em Selection of $\alpha_2(\up)$.} From the proof of Proposition \ref{prop:control}, the point $(q_\star, 0_n, 0_n)$ is a closed-loop equilibrium under the proposed feedback law. At this equilibrium, the steady-state input is given by
\begin{equation}
\label{ustar}
u_\star = {1\over g(q)}  T_u^{-1}  \begmat{ \alpha_1 \sin(q_\Sigma^\star) + \alpha_2(\up) \theta_\star \\ 0}.
\end{equation}

It shows the potential use of \emph{steady-state signals} for identifying both the parameter $\alpha_1$ and the nonlinear function $\alpha_2(\up)$ as well. For a given desired configuration \mbox{$q_\star \in \rea^n$}, the equation \eqref{ustar} means that the steady-state input $u_\star$ depends on pressure $\up$, and this property has been experimentally verified and will be presented in the next section, showing the rationale to use a pressure-dependent function $\alpha_2(\up)$ rather than a constant coefficient. 

We fixed $q_\star$ and drew the relation between $\up$ and $u_{\star,1}$, as shown in Fig. \ref{fig:rela}, which directly illustrates the nonlinearity of $\alpha_2$. Since $\alpha_1$ is constant, we conclude that $\alpha_2(\cdot)$ is a monotonically increasing function. In terms of the shape of the curve, we suggest using\footnote{In our experiments, we found that this class of polynomial functions provides sufficient accuracy. It would be interesting to explore more advanced dynamic learning tools to approximate these functions, such as the Koopman operator \cite{YIMAN}, kernel methods \cite{PILetal}, and Gaussian process regression \cite{WILetal}.}

\begin{equation} \label{alpha2}
    \alpha_2(\up) = c_1 \up^2+c_2 \up+ c_3
\end{equation}
with some unknown coefficients $c_i >0 ~(i=1,2,3)$ to fit this function.

\begin{figure}[htbp]
\centering
\begin{tikzpicture}
    \begin{axis}[
        width  = 0.8\linewidth,
        height = 0.4 \linewidth,
        xlabel={Vacuum pressure $\up$ [kPa]},
        ylabel={Input force $u_{\star,1}$ [N]},
        ymajorgrids=true,
        grid style=dashed,
        legend pos=north west,
        scale only axis,
        label style={font=\footnotesize},
        tick label style={font=\footnotesize},
        xtick={0,5,10,15,20,25,30,35,40},
        xticklabels={0,5,10,15,20,25,30,35,40},
        xmin=0,
        xmax=40,
        ymin=0,
        ymax=60,
        axis background/.style={fill=white},        
    ]
    % Data for 5 deg
    \addplot[name path=upper5,draw=none] coordinates {
        (0, 8.5895 + 0.1541)
        (5, 17.9293 + 0.1602)
        (10, 21.0106 + 0.0357)
        (15, 22.4012 + 0.2441)
        (20, 23.1164 + 0.3860)
        (25, 22.9882 + 0.2443)
        (30, 23.2452 + 0.1472)
        (35, 23.2609 + 0.4710)
        (40, 23.3844 + 0.1699)
    };
    
    \addplot[name path=lower5,draw=none] coordinates {
        (0, 8.5895 - 0.1541)
        (5, 17.9293 - 0.1602)
        (10, 21.0106 - 0.0357)
        (15, 22.4012 - 0.2441)
        (20, 23.1164 - 0.3860)
        (25, 22.9882 - 0.2443)
        (30, 23.2452 - 0.1472)
        (35, 23.2609 - 0.4710)
        (40, 23.3844 - 0.1699)
    };
    
    \addplot[fill=blue!40,opacity=0.8] fill between[of=upper5 and lower5];
    
    \addplot[only marks, mark=x, mark options={blue!90}] coordinates {
        (0, 8.5895)
        (5, 17.9293)
        (10, 21.0106)
        (15, 22.4012)
        (20, 23.1164)
        (25, 22.9882)
        (30, 23.2452)
        (35, 23.2609)
        (40, 23.3844)
    };

    % Data for 10 deg
    \addplot[name path=upper10,draw=none] coordinates {
        (0, 20.1735 + 1.3921)
        (5, 37.1697 + 0.3028)
        (10, 45.0706 + 0.8402)
        (15, 49.4509 + 0.4518)
        (20, 50.1903 + 1.3852)
        (25, 51.1226 + 0.3205)
        (30, 51.3704 + 0.8097)
        (35, 52.3334 + 0.1139)
        (40, 52.5439 + 0.8065)
    };
    \addlegendimage{empty legend};
    \addplot[name path=lower10,draw=none] coordinates {
        (0, 20.1735 - 1.3921)
        (5, 37.1697 - 0.3028)
        (10, 45.0706 - 0.8402)
        (15, 49.4509 - 0.4518)
        (20, 50.1903 - 1.3852)
        (25, 51.1226 - 0.3205)
        (30, 51.3704 - 0.8097)
        (35, 52.3334 - 0.1139)
        (40, 52.5439 - 0.8065)
    };
    
    \addplot[fill=red!20,opacity=0.5] fill between[of=upper10 and lower10];
    
    \addplot[only marks, mark=diamond, mark options={red!80}] coordinates {
        (0, 20.1735)
        (5, 37.1697)
        (10, 45.0706)
        (15, 49.4509)
        (20, 50.1903)
        (25, 51.1226)
        (30, 51.3704)
        (35, 52.3334)
        (40, 52.5439)
    };
    
    \node[anchor=north west, color=red] at (rel axis cs:0.6,0.65) {$\diamond$ 10 deg};
    \node[anchor=north west, color=blue] at (rel axis cs:0.3,0.65) {$\times$ 5 deg};
    \end{axis}
\end{tikzpicture}
\vspace{-5mm}
\caption{The relationship between $\up$ and $u_{\star,1}$ for two example bending angles, with $q_\star = 5$ deg and $10$ deg, is shown. The experiments were repeated three times at each vacuum pressure. The symbols ``$\times$'' and ``$\diamond$'' indicate the mean values and the error bands represent $\pm$1 standard deviation.}
\label{fig:rela}
\end{figure}

\vspace{.3em}

2) {\em Modelling of $\phi(\up)$ via mechanisms.} The term $\phi(\up)$ is another unknown function that appears in our proposed dynamic model. Instead of a data-driven approach to identify this nonlinear function, we note that this nonlinearity from friction present between layers of thin material has been studied via underlying mechanisms in LJ-based continuum robots. In particular, it is shown in \cite[Eq. (21)]{KIMetal} that the resisting torque caused by membrane elongation and shear $\tau_m$ satisfies $\tau_m \propto \sqrt{\up}$. Based on this fact, we consider using the following 
\begin{equation}
\label{parameterization:phi}
    \phi(\up) = c_4 + c_5 \sqrt{\up}
\end{equation}
with yet-to-be-determined parameters $c_i>0$ ($i=4,5$) to fit the nonlinear function $\phi$. The experimental results in the next section will validate that this selection provides good fitting.

3) {\em Identification of coefficients $c_i$.} With above the parameterization to $\alpha_2$ and $\phi$, we can identify the five coefficients $c_i$ using experimental data. First, we regulated the continuum robot to different closed-loop equilibria ($q_\star$)s under different pressures. The recorded data are denoted as $(u_\star^j, \up^j, q_\star^j)$ with the superscript $j$ representing the $j$-th experiment that we have done and $j \in W :=\{1,\ldots, w\}, ~w\in\mathbb{N}_+$. The parameters $\alpha_1$ and $c_i$ ($i=1,2,3$) can be estimated by solving the optimization problem
\begin{equation}
\label{eq:opti}
\begin{aligned}
        \underset{\alpha_1,c_1,c_2,c_3}{\arg\min} &\quad  \sum_{j \in W}   J(q_\star^j, u_\star^j, \up^j)
        \\
   \mbox{s.t.} & \quad  \alpha_1, c_2, c_3 >0
\end{aligned}
\end{equation}
with the cost function
% \begin{equation*}
% \begin{aligned}
%    J(q_\star^j,& u_\star^j, \up^j) \\
%    & := \Big| T_u g(q_\star^j) u_\star^j - [\alpha_1\sin(q_\Sigma^{\star,j}) + (c_1 + c_2 e^{-c_3 \up^j})q_{1,\star}^j] \Big|^2
% \end{aligned}
% \end{equation*}
\begin{equation*}
\begin{aligned}
   J(q_\star^j, u_\star^j, \up^j)
   := \Big| T_u g(q_\star^j) u_\star^j - [\alpha_1\sin(q_\Sigma^{\star,j}) + \alpha_2(\up)q_{1,\star}^j] \Big|^2
\end{aligned}.
\end{equation*}
In general, we require a sufficient number of experiments, i.e., a relatively large value for $w$, to ensure that the identified parameters closely reflect their real values. The identification of $c_4$ and $c_5$ can be done in a similar way using the parameterization~\eqref{parameterization:phi} and the open-loop stiffness relation \eqref{eq:K}, which contains the unknown function $\phi$. 

%\footnote{Indeed, compared to other actuation inputs such as cable tensions, it is currently challenging to regulate the pressure $\up$ to given value \emph{with high precision}. As a result, this limitation hinders the generation of a dataset with sufficient samples for data-driven approaches to identify the nonlinear function $\phi$. This is the underlying reason why we prefer to model the function $\phi$ via mechanisms in this research.}

4) {\em Discussions.} Below, we have some discussions about the proposed controller.

\begin{remark}\rm
In the proposed controller, there are three parameters: $\gamma, K_{\tt D}$ and $\up^\star$.
Parameters $\gamma$ and $K_{\tt D}$ appear in the energy shaping part, implying their direct correlation with the closed-loop performance for position regulation. Equation \eqref{eq:K} indicates that the overall stiffness $K$ is determined by $\gamma$ and $\up^\star$. The proposed feedback controller achieves simultaneous position and stiffness control. The experimental results presented in the subsequent section will demonstrate that the term $\sigma_0 \phi(\up^\star) I_n$ \emph{dominates} the stiffness $K$ in \eqref{eq:K}. As a result, we can reasonably assert the limited impact of $\gamma$ on the stiffness. This rationale underlies why we refer to it as ``decoupling position and stiffness control.''
\end{remark}

\begin{remark}\rm
In previous work, we addressed the control problem for tendon-driven continuum robots \emph{without} layer jamming. In fact, the tension term $\tau$ in the feedback law \eqref{eq:ctrl1} is capable of handling fractions caused by jamming. The new model proposed in this paper includes an additional input channel $\up$, providing the possibility of achieving decoupled stiffness control.
\end{remark}

\begin{remark}\rm
In \cite{GANetal}, the authors consider the problem of joint control for a class of underactuated port-Hamiltonian models with the LuGre dynamics. They specifically consider the case where the LuGre model includes the mechanical input matrix $G(q)$, which is instrumental in solving the matching equation in passivity-based control (PBC). In contrast, our paper explores a more general scenario in which $G(q)$ does not appear in the fractional $z$-dynamics, making our model more closely aligned with the situations encountered in continuum robots.
\end{remark}

5) {\em Different jamming sheaths and vacuum pressures.} In the shape-locking experiments, which will be introduced in the next section, we use a jamming sheath with five layers and high vacuum pressures with a maximum value of 80~kPa to show the shape-locking phenomenon clearly. On the other hand, in order to be able to balance the adjustable stiffness and configuration control performance---under a relatively large range of vacuum pressures---in the second group of experiments, we considered the jamming sheath with \emph{two} layers and the maximum vacuum pressure of 40~kPa to avoid that the robot is controlled into shape locking which will result in failed stiffness control. Additional details about the jamming sheath and the jamming flap pattern can be found in \cite{FANetal, KIMetal}.

\section{Experiments and Results}
\label{sec:5}

\subsection{Test Rig}

\begin{figure*}[ht]
    \centering
    \includegraphics[width = .9\linewidth]{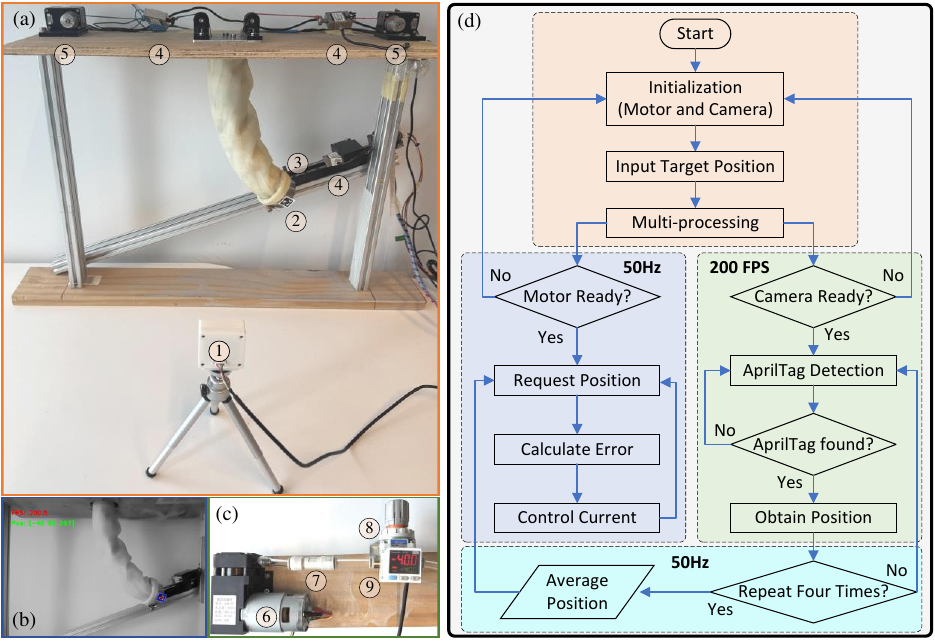}
    \caption{Photo of the entire experimental platform: (a) test rig setup; (b) camera view; (c) vacuum pressure generation setup; and (d) control process flowchart. (1 camera, 2 AprilTag, 3 linear actuator, 4 force sensor, 5 servo motor, 6 vacuum pump, 7 non-return valve, 8 vacuum pressure regulator, and 9 pressure gauge.)} 
    \label{fig:TestRig}
\end{figure*}

The proposed modeling and control approaches have been experimentally tested using the continuum robot \mbox{OctRobot-I} \cite{FANetal}. We considered six segments \mbox{(\emph{i.e.} $n=6$)} with an overall length of 252 mm, and a diameter of approximately 50 mm. The continuum robot approximately meets all the assumptions outlined in Sections \ref{sec:3}-\ref{sec:4}. 

The test platform, shown in Fig. \ref{fig:TestRig}(a), consists of the one-section continuum robot, two servo motors (XM430-W350, DYNAMIXEL) with customized aluminum spools, three force sensors (JLBS-M2-10kg), a linear actuator, and a custom high frame rate camera (MJEG-640*400@210FPS) for capturing the AprilTag marker \cite{apriltag}. The control experiments and data collections were conducted using the Python 3.12.3 environment. We installed the AprilTag marker on the distal position of the robot body to provide its real-time coordinate; see Fig.~\ref{fig:TestRig}(b). For the vacuum pressure, a micro piston vacuum pump (H40-85), vacuum pressure regulator (SMC IRV10-C08), and pressure gauge (Panasonic DP-100) are applied to achieve the desired vacuum pressure values, as illustrated in Fig.~\ref{fig:TestRig}(c).
We illustrate the overall control implementation process in Fig.~\ref{fig:TestRig}(d), which utilizes Python’s multiprocessing capabilities to enable simultaneous motor control and position detection.
To improve the accuracy of position capture and synchronize with the motor control frequency, we detected the AprilTag marker four times and calculated the average position data for control.\footnote{Note that multiple AprilTag markers could be attached to the robot body to provide additional information of the robot's configuration. However, this may introduce detection latency and make it challenging to capture all markers within the same frame due to noise and blur, especially when the robot is in motion.}

For configuration regulation, in the absence of external loads, we observed that the robotic system approximately satisfied the constant curvature condition, i.e. $q_i = q_j ~(i,j \in~\caln)$. Based on this observation and the coordinate along with basic geometric relations, the configuration vector $q \in \rea^n$ can be estimated in real-time by assuming \mbox{$q_i= \theta(t)$}.\footnote{Note that in theoretical analysis in Section \ref{sec:4}, we do \emph{not} assume $q_i = q_j ~(i,j\in \caln)$.} The notation $\theta$ will be used to represent the estimated value of $q_i$ throughout the remainder of this section. Unless otherwise specified, angles are expressed in deg and pressure in kPa.

The servo motors were operated in the current regulation mode such that we were able to control the torque directly for the robot. The experimental platform was equipped with force sensors, which were instrumental in analyzing the relation between bending angles and applied forces. However, in the closed-loop experiments, we removed the force sensors and regulated the motor torque directly.

\subsection{Model Verification for Shape Locking}
The first group of experiments was designed and conducted to verify that the proposed model effectively captures the key phenomenon of shape locking, as theoretically analyzed in Section \ref{sec:31}. The robot was initialized from the open-loop configuration $q_\star = 0$ (Phase 1), and then driven to the total bending angle of 60 deg via tendon (Phase 2). When the system kept at the steady-state stage, we vacuumed and kept the jamming layer sheath to a negative pressure of -30~kPa (Phase 3), and then released the tendons (Phase 4). 

During the above-mentioned process, the sequence photos are presented in Figs.~\ref{fig:PhotoSeq}(a)-(d), and the force of $u_1$ was operated as the trajectory in Fig.~\ref{fig:PhotoSeq}(e). Note that we use $[-5,0]$ s to denote the initial status before starting the motor drive. As illustrated in Figs.~\ref{fig:PhotoSeq}(c)-(d), it achieved shape locking after applying a negative pressure ($-\up$). To clearly show the shape-locking phenomenon, Fig.~\ref{fig:Photo2} illustrates the overlay photos of Phases 3 and 4 with two different pressures $\up$ = 30 and 80 kPa. It can be observed tiny positional changes as theoretically predicted in Remark~\ref{rem:long}(iv) -- a larger $\up$ yields a smaller displacement (3.8 mm of 80 kPa, 6.7 mm of 60 kPa, and 9.2 mm of 30 kPa).

\begin{figure}[!htp]
    \centering
    \includegraphics[width = 0.98\linewidth]{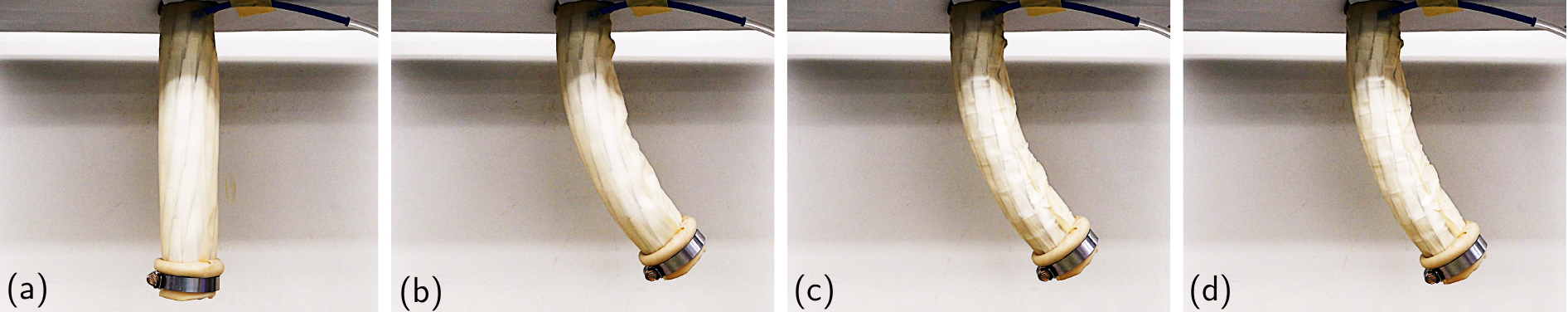}
    \par\vspace{1mm} 
    \includegraphics[width = 0.98\linewidth, height =3.5cm]{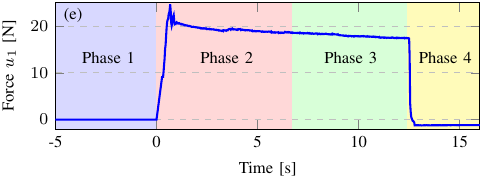}
    \caption{Photo sequence and force of the shape-locking experiment: 
    (a) Phase 1: Initial configuration without $u_1$; 
    (b) Phase 2: Drive to the bending configuration 60 deg via tendon force $u_1$;
    (c) Phase 3: Vacuum to $u_{\tt P} = 30$ kPa with motor-driven retained;
    (d) Phase 4: Vacuum retained and tendon released $u_1=0$.
    (Photos were taken in the steady state of each phase.)
    (e) Trajectory of the tendon force $u_1$.}
    \label{fig:PhotoSeq}
\end{figure}

\begin{figure}[!htp]
    \centering
    \includegraphics[width = 0.95\linewidth]{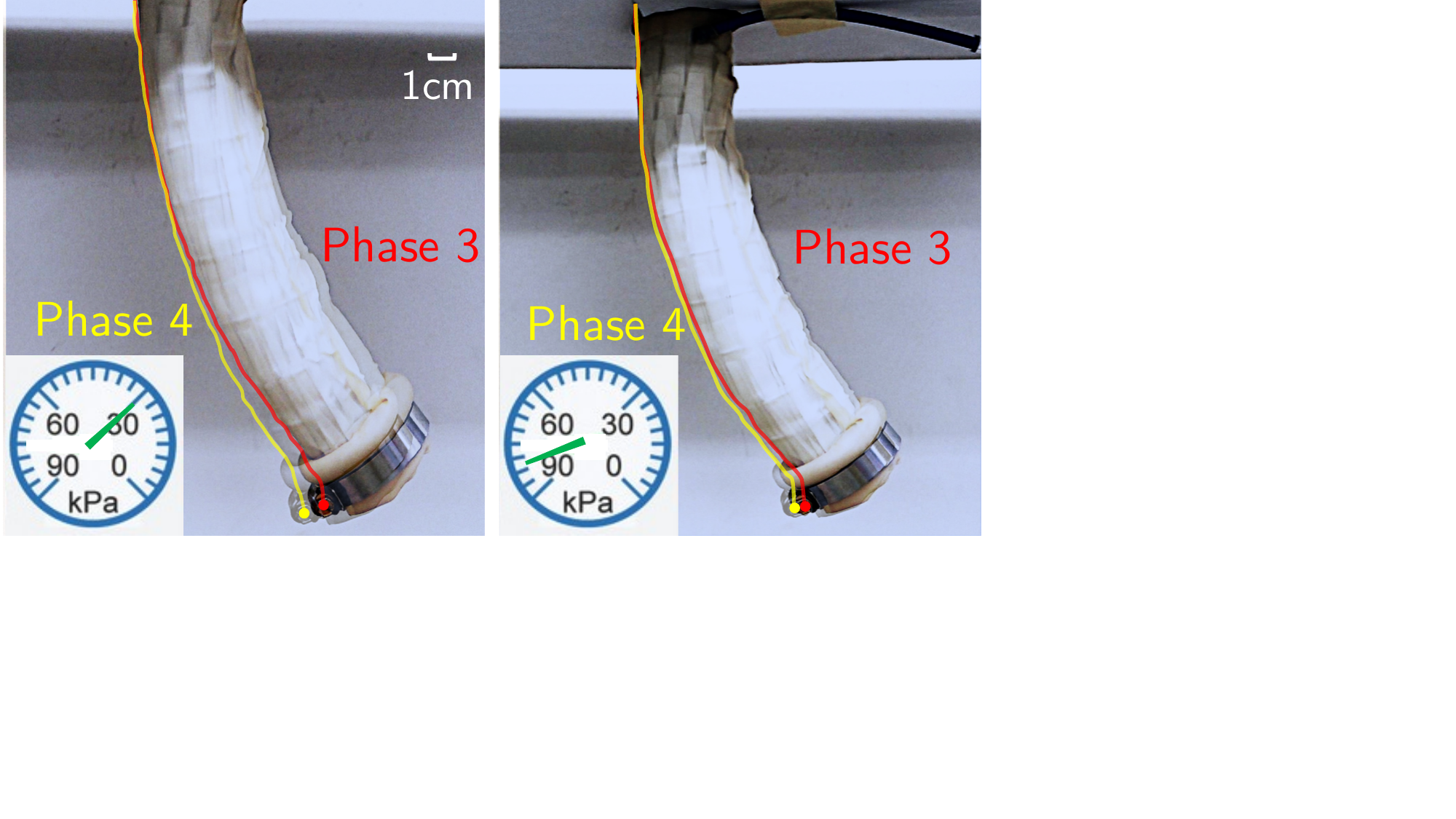}
    \caption{Overlay photos of the shape-locking phenomenon. Left: displacement of 9.2~mm with $\up$ = 30~kPa; Right: displacement of 3.8~mm with $\up$ = 80~kPa. (``\red{\tiny $\bullet$}'' and ``\textcolor{yellow}{\tiny$\bullet$}'' are used to mark a fixed point on the robot body; one-side contours are also highlighted in the figure.) 
    }
    \label{fig:Photo2}
\end{figure}

\subsection{Open-loop Stiffening}
In this section, we experimentally verify the results in Section~\ref{sec:32} on the continuum robotic platform OctRobot-I, i.e., the algebraic relation between the tension and the open-loop stiffness in Proposition \ref{prop:stiffness}. Although the overall stiffness matrix $K \succ 0$ cannot be measured directly, we can detect the transverse stiffness $K_{\tt T} \in \rea_{\ge 0}$ in the end-effector around the open-loop equilibrium $q_\star$ of the continuum robot. For this purpose, we utilized a linear actuator placed at the end-effector to generate a small displacement $\delta x > 0$, as illustrated in Fig. \ref{fig:OpenStiff}(a). The actuator was connected to the force sensor for measuring the external force, denoted as $f_{\tt ext}$, in relation to the displacement. By calculating the ratio of the measured force to the applied displacement, \emph{i.e.}, ${f_{\tt ext} / \delta x}$, we may estimate the transverse stiffness, given that $\delta x$ was sufficiently small. This procedure can be used to verify the theoretical findings in Section \ref{sec:3} regarding open-loop stiffening, thereby demonstrating the rationale of the proposed dynamical model for LJ-based continuum robots.

To improve reliability, each experiment has been repeatedly conducted three times under the same conditions. The experimental results of the open-loop transverse stiffness under different negative pressures $(-\up)$ are plotted in Fig.~\ref{fig:OpenStiff}(b). This clearly coincides with the claim in Proposition~\ref{prop:stiffness}. The coefficient of determination $R_s^2$ is 0.9216, showing good linearity with respect to the value $\sqrt{\up}$. The result aligns with the equation \eqref{eq:K}.

\begin{figure}[htbp]
    \centering
    \setlength{\tabcolsep}{1pt} 
    \begin{tabular}{c c}
        \includegraphics[width = 0.3\linewidth]{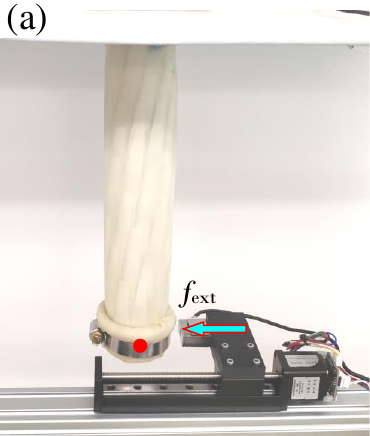} & \begin{tikzpicture}
    \begin{axis}[
      xmin=0, xmax=40,
      ymin=0, ymax=350,
      ymajorgrids=true,
      grid style=dashed,
      label style={font=\footnotesize},
      tick label style={font=\footnotesize},
      %legend pos=north west,
      legend style={
        draw=none,
        fill=none,   
        font=\footnotesize,   
        at={(0.28,0.95)},     
        anchor=north,         
        cells={anchor=west}   
      },
      width  = 0.73 \linewidth,
      height = 0.42 \linewidth,
      xlabel={Pressure $\up$ [kPa]},
      ylabel={Stiffness [N/m]},
      ylabel style={yshift=-10pt}
    ]
    
  \addplot[domain = 0:40, color=babyblue, line width = 2pt] {29.1067*sqrt(x)  + 110.9847};
  \addlegendentry{$K_{\tt T}$}
  
    %\addplot[color=carminepink, line width = 1.5pt] 
    \addplot[only marks, mark=x, color=black, line width = 0.5pt, mark size=2.000pt]
    coordinates {(0, 84.3197) (5, 185.9236)(10, 228.7985) (15, 237.3608) (20, 252.1172) (25, 255.1395) (30, 270.5735) (35, 270.4194) (40, 275.4801)};

     \addplot[name path=us_top,color=carminepink!70] coordinates {(0, 84.3197 + 2.1553) (5, 185.9236 + 4.4719) (10, 228.7985 + 6.4175) (15, 237.3608 + 3.7898)  (20, 252.1172 + 11.4732) (25, 255.1395 + 4.5254) (30, 270.5735 + 10.7893) (35, 270.4194 + 7.7727)  (40, 275.4801 + 3.6054) };

    \addplot[name path=us_down,color=carminepink!70] 
    coordinates {(0, 84.3197 - 2.1553) (5, 185.9236 - 4.4719) (10, 228.7985 - 6.4175) (15, 237.3608 - 3.7898)  (20, 252.1172 - 11.4732) (25, 255.1395 - 4.5254) (30, 270.5735 - 10.7893) (35, 270.4194 - 7.7727)  (40, 275.4801 - 3.6054) };

    \addplot[carminepink!50,fill opacity=0.5] fill between[of=us_top and us_down];
    \end{axis}
    
    \draw (2.6,.53) node {\footnotesize $\begin{aligned}K_{\tt T} & = 29.1067 \sqrt{\up} + 110.9847 \\ R^2_s& = 0.9216\end{aligned}$  };

    \draw (0.4, 1.95) node {\footnotesize (b)};
\end{tikzpicture}
\vspace{-.4cm}
% \caption{Experimental relation between $\up$ and the transverse stiffness under the jamming sheath with two layers. (``$\times$'' shows the mean values; color band represents the $\pm$1 standard deviation.)}
% \label{fig:stiff} \\
        \\
    \end{tabular}
    \caption{Experiments for open-loop stiffening. (a) Experimental setup. (b) Experimental relation between $\up$ and the transverse stiffness under the jamming sheath with two layers. The experiments were repeated three times under each vacuum pressure. ``$\times$'' shows the mean values and color band represents the $\pm$1 standard deviation.}
    \label{fig:OpenStiff}
\end{figure}

\subsection{Parameter Identification}
To implement the proposed approach, we identified key plant parameters, which were used in the proposed controller, as discussed in Section \ref{sec:4}. The basic idea is that at any static configuration $q$ with $p =0$, the identity $\nabla_q U(q) = G(q) u$ holds true and it includes the key parameters to be identified. We regulated the continuum robot to different configurations, denoted as $\theta^j$ ($j=1,\ldots, w$ with some $w\in \mathbb{N}_+$) by driving the cables, and recorded the corresponding forces $(u_1^j, u_2^j)$. In the experiments, we selected $w = 15$ and repeated three times under each vacuum pressure, shown as Fig. \ref{fig:paraest}. Using this data set (in total 225 operation modes) and solving the optimization problem in \eqref{eq:opti}, we identified the key parameters as 
$
c_1= -0.0015, c_2 = 0.0890, c_3 = 1.7056,$ and $\alpha_1 = 1.0227.
$

\begin{figure}[htbp]
\centering
\begin{tikzpicture}
    \begin{axis}[
        width  = 0.85\linewidth,
        height = 0.5 \linewidth,
        xlabel={Bending angle $q_\star$ [deg]},
        ylabel={Mean force $u_\star$ [N]},
        ylabel style={yshift=-10pt},
        ymajorgrids=true,
        grid style=dashed,
        legend pos=north west,
        scale only axis,
        label style={font=\footnotesize},
        tick label style={font=\footnotesize},
        xtick={1,2,...,15},
        xmin=1,
        xmax=15,
        ymin=0,
        ymax=47,
        axis background/.style={fill=white},        
    ]
    % Data for 0 kPa
    \addplot[name path=upper0,draw=none] coordinates {
        (1, 3.0707 + 0.1612)
        (2, 4.4926 + 0.1957)
        (3, 5.9610 + 0.1281)
        (4, 7.5346 + 0.2305)
        (5, 9.1992 + 0.1081)
        (6, 10.6715 + 0.1363)
        (7, 11.7528 + 0.4159)
        (8, 12.7171 + 0.3406)
        (9, 13.9116 + 0.3638)
        (10, 15.9049 + 0.0629)
        (11, 17.3387 + 0.4745)
        (12, 18.3272 + 0.1798)
        (13, 19.8629 + 0.3898)
        (14, 22.1832 + 0.9300)
        (15, 24.9068 + 0.6315)
    };
    \addplot[name path=lower0,draw=none] coordinates {
        (1, 3.0707 - 0.1612)
        (2, 4.4926 - 0.1957)
        (3, 5.9610 - 0.1281)
        (4, 7.5346 - 0.2305)
        (5, 9.1992 - 0.1081)
        (6, 10.6715 - 0.1363)
        (7, 11.7528 - 0.4159)
        (8, 12.7171 - 0.3406)
        (9, 13.9116 - 0.3638)
        (10, 15.9049 - 0.0629)
        (11, 17.3387 - 0.4745)
        (12, 18.3272 - 0.1798)
        (13, 19.8629 - 0.3898)
        (14, 22.1832 - 0.9300)
        (15, 24.9068 - 0.6315)
    };
    \addplot[fill=black!20,opacity=0.6] fill between[of=upper0 and lower0];
    \addplot[only marks, mark=otimes, mark options={scale=0.6, black!70}] coordinates {
        (1, 3.0707)
        (2, 4.4926)
        (3, 5.9610)
        (4, 7.5346)
        (5, 9.1992)
        (6, 10.6715)
        (7, 11.7528)
        (8, 12.7171)
        (9, 13.9116)
        (10, 15.9049)
        (11, 17.3387)
        (12, 18.3272)
        (13, 19.8629)
        (14, 22.1832)
        (15, 24.9068)
    };

    % Data for 10 kPa
    \addplot[name path=upper10,draw=none] coordinates {
        (1, 4.0526 + 0.5596)
        (2, 6.6272 + 0.5089)
        (3, 9.0494 + 0.8559)
        (4, 11.9535 + 1.1069)
        (5, 14.8491 + 0.4792)
        (6, 16.6620 + 0.6812)
        (7, 19.1091 + 0.6451)
        (8, 21.6935 + 0.5955)
        (9, 24.2033 + 0.5958)
        (10, 26.1849 + 0.5335)
        (11, 28.9386 + 0.1691)
        (12, 31.3602 + 0.5644)
        (13, 33.8407 + 0.4407)
        (14, 36.7251 + 0.9261)
        (15, 40.1686 + 0.1125)        
    };
    \addplot[name path=lower10,draw=none] coordinates {
        (1, 4.0526 - 0.5596)
        (2, 6.6272 - 0.5089)
        (3, 9.0494 - 0.8559)
        (4, 11.9535 - 1.1069)
        (5, 14.8491 - 0.4792)
        (6, 16.6620 - 0.6812)
        (7, 19.1091 - 0.6451)
        (8, 21.6935 - 0.5955)
        (9, 24.2033 - 0.5958)
        (10, 26.1849 - 0.5335)
        (11, 28.9386 - 0.1691)
        (12, 31.3602 - 0.5644)
        (13, 33.8407 - 0.4407)
        (14, 36.7251 - 0.9261)
        (15, 40.1686 - 0.1125)        
    };
    \addplot[fill=cyan!30,opacity=0.6] fill between[of=upper10 and lower10];
    \addplot[only marks, mark=triangle, mark options={scale=0.6, cyan!80}] coordinates {
        (1, 4.0526)
        (2, 6.6272)
        (3, 9.0494)
        (4, 11.9535)
        (5, 14.8491)
        (6, 16.6620)
        (7, 19.1091)
        (8, 21.6935)
        (9, 24.2033)
        (10, 26.1849)
        (11, 28.9386)
        (12, 31.3602)
        (13, 33.8407)
        (14, 36.7251)
        (15, 40.1686)
    };

    % Data for 20 kPa
    \addplot[name path=upper20,draw=none] coordinates {
        (1, 3.3975 + 0.5585)
        (2, 6.0695 + 0.8281)
        (3, 8.9984 + 0.9641)
        (4, 11.9542 + 0.9596)
        (5, 14.8779 + 0.5101)
        (6, 17.4197 + 0.6587)
        (7, 20.1355 + 0.8960)
        (8, 22.7650 + 0.6156)
        (9, 25.4854 + 0.8319)
        (10, 27.8900 + 0.3657)
        (11, 30.6012 + 0.2257)
        (12, 33.6674 + 0.5864)
        (13, 36.8958 + 0.1678)
        (14, 39.7214 + 0.9573)
        (15, 42.9576 + 1.1358)        
    };

    \addplot[name path=lower20,draw=none] coordinates {
        (1, 3.3975 - 0.5585)
        (2, 6.0695 - 0.8281)
        (3, 8.9984 - 0.9641)
        (4, 11.9542 - 0.9596)
        (5, 14.8779 - 0.5101)
        (6, 17.4197 - 0.6587)
        (7, 20.1355 - 0.8960)
        (8, 22.7650 - 0.6156)
        (9, 25.4854 - 0.8319)
        (10, 27.8900 - 0.3657)
        (11, 30.6012 - 0.2257)
        (12, 33.6674 - 0.5864)
        (13, 36.8958 - 0.1678)
        (14, 39.7214 - 0.9573)
        (15, 42.9576 - 1.1358)
    };
    \addplot[fill=blue!30, opacity=0.6] fill between[of=upper20 and lower20];
    \addplot[only marks, mark=x, mark options={scale=0.6, blue!80}] coordinates {
        (1, 3.3975)
        (2, 6.0695)
        (3, 8.9984)
        (4, 11.9542)
        (5, 14.8779)
        (6, 17.4197)
        (7, 20.1355)
        (8, 22.7650)
        (9, 25.4854)
        (10, 27.8900)
        (11, 30.6012)
        (12, 33.6674)
        (13, 36.8958)
        (14, 39.7214)
        (15, 42.9576)        
    };

    % Data for 30 kPa
    \addplot[name path=upper30,draw=none] coordinates {
        (1, 2.5574 + 0.0708)
        (2, 5.1104 + 0.3881)
        (3, 7.9871 + 0.4005)
        (4, 10.4891 + 0.7747)
        (5, 14.5327 + 0.4862)
        (6, 17.5407 + 0.3957)
        (7, 20.2937 + 0.5401)
        (8, 23.0645 + 0.3088)
        (9, 26.0980 + 0.7738)
        (10, 28.5882 + 0.6134)
        (11, 31.5204 + 0.3649)
        (12, 34.8475 + 0.1453)
        (13, 37.7659 + 0.5357)
        (14, 40.8910 + 0.8441)
        (15, 43.9703 + 0.6496)
    };
    \addplot[name path=lower30,draw=none] coordinates {
        (1, 2.5574 - 0.0708)
        (2, 5.1104 - 0.3881)
        (3, 7.9871 - 0.4005)
        (4, 10.4891 - 0.7747)
        (5, 14.5327 - 0.4862)
        (6, 17.5407 - 0.3957)
        (7, 20.2937 - 0.5401)
        (8, 23.0645 - 0.3088)
        (9, 26.0980 - 0.7738)
        (10, 28.5882 - 0.6134)
        (11, 31.5204 - 0.3649)
        (12, 34.8475 - 0.1453)
        (13, 37.7659 - 0.5357)
        (14, 40.8910 - 0.8441)
        (15, 43.9703 - 0.6496)
    };
    \addplot[fill=green!20,opacity=0.5] fill between[of=upper30 and lower30];
    \addplot[only marks, mark=diamond, mark options={scale=0.6, green!80}] coordinates{
        (1, 2.5574)
        (2, 5.1104)
        (3, 7.9871)
        (4, 10.4891)
        (5, 14.5327)
        (6, 17.5407)
        (7, 20.2937)
        (8, 23.0645)
        (9, 26.0980)
        (10, 28.5882)
        (11, 31.5204)
        (12, 34.8475)
        (13, 37.7659)
        (14, 40.8910)
        (15, 43.9703)
    };

    % Data for 40 kPa
    \addplot[name path=upper40,draw=none] coordinates {
        (1, 2.8634 + 0.5157)
        (2, 5.5504 + 0.8293)
        (3, 8.4643 + 0.8836)
        (4, 11.1919 + 1.3728)
        (5, 15.1773 + 0.6366)
        (6, 18.1572 + 0.5928)
        (7, 21.0162 + 0.6649)
        (8, 24.1386 + 0.8571)
        (9, 26.6903 + 0.2459)
        (10, 29.4943 + 0.1743)
        (11, 32.3010 + 0.0432)
        (12, 35.3783 + 0.0430)
        (13, 38.6525 + 0.0460)
        (14, 41.7226 + 0.2720)
        (15, 45.3216 + 0.1475)
    };
    \addplot[name path=lower40,draw=none] coordinates {
        (1, 2.8634 - 0.5157)
        (2, 5.5504 - 0.8293)
        (3, 8.4643 - 0.8836)
        (4, 11.1919 - 1.3728)
        (5, 15.1773 - 0.6366)
        (6, 18.1572 - 0.5928)
        (7, 21.0162 - 0.6649)
        (8, 24.1386 - 0.8571)
        (9, 26.6903 - 0.2459)
        (10, 29.4943 - 0.1743)
        (11, 32.3010 - 0.0432)
        (12, 35.3783 - 0.0430)
        (13, 38.6525 - 0.0460)
        (14, 41.7226 - 0.2720)
        (15, 45.3216 - 0.1475)
    };
    \addplot[fill=red!30,opacity=0.6] fill between[of=upper40 and lower40];
    \addplot[only marks, mark=o, mark options={scale=0.6, red!90}] coordinates {
        (1, 2.8634)
        (2, 5.5504)
        (3, 8.4643)
        (4, 11.1919)
        (5, 15.1773)
        (6, 18.1572)
        (7, 21.0162)
        (8, 24.1386)
        (9, 26.6903)
        (10, 29.4943)
        (11, 32.3010)
        (12, 35.3783)
        (13, 38.6525)
        (14, 41.7226)
        (15, 45.3216)
    };
    \node[anchor=north west, color=black] at (rel axis cs:0.03,0.82) {$\otimes$ 0 kPa};
    \node[anchor=north west, color=cyan] at (rel axis cs:0.23,0.82) {$\triangle$ 10 kPa};
    \node[anchor=north west, color=blue] at (rel axis cs:0.45,0.82) {$\times$ 20 kPa};
    \node[anchor=north west, color=green] at (rel axis cs:0.15,0.73) {$\diamond$ 30 kPa};
    \node[anchor=north west, color=red] at (rel axis cs:0.36,0.73) {$\circ$ 40 kPa};
    \end{axis}
\end{tikzpicture}
\vspace{-5mm}
\caption{Data set for parameter identification. All the marks show mean values and error bands represent the $\pm$1 standard deviation.}
\label{fig:paraest}
\end{figure}

Note that Fig. \ref{fig:OpenStiff}(b) only shows the open-loop stiffness at the zero configuration $\theta_\star=0$; see \eqref{eq:K}. In order to control the closed-loop stiffness, it is also necessary to estimate the function $\phi(\cdot)$. To this end, we need to identify the function $\alpha_2(\up)$. We noted that at $\theta_\star = 0$, the terms $\alpha_1$ and $\alpha_2(\cdot)$ are relatively negligible compared to the last term in \eqref{open_loop_eq} for their contribution to stiffness. Therefore, we treated the 
$
K|_{q_\star = 0} \approx \sigma_0 \phi(\up^\star)I_3.
$\footnote{Such an approximation simplifies the parameter identification process and remain valid, as the contribution of the term $\alpha_2(\up)$ to the overall stiffness is relatively minor. Detailed results are provided in Table~\ref{tab:3}.}
By fitting the stiffness data shown in Fig. \ref{fig:OpenStiff}(b) in the above equation, we obtain the estimates $
c_4= 110.9847,$ and  $c_5 = 29.1067.
$
Note that the parameters $c_4$ and $c_5$ include the bristle stiffness coefficient $\sigma_0$.

\subsection{Closed-loop Control Experiments}

To evaluate the performance of the proposed configuration and stiffness controller, we considered the desired homogeneous configurations ${q}_\star = [\theta_\star, \ldots, \theta_\star]^\top$, with three different values, i.e. $\theta_\star = $ 5, 10 and 15 deg\footnote{Note that $\theta_\star $ is the segment bending angle and there are six segments in our robot platform. Therefore, the real robot bending angles were $6\theta_\star$, and the maximum angle is limited to 60 deg due to its structural design.}. We conducted experiments for the cases without external perturbation under five vacuum pressures $\up =$ 0, 10, 20, 30, and 40~kPa with the gains $\gamma = 0.1$ and $K_{\tt D} = 1$. Note that the gains $\gamma$ and $K_{\tt D}$ play similar roles to those in recent work \cite{YIetal23}. Interested readers may refer to it for further details on their effects and tuning methods. 

The steady-state accuracy for these experiments under different equilibria and vacuum pressures is summarized in Table~\ref{tab:2}. Here, $\cali_{\tt B} \subset \rea_+$ is the time interval of steady-state. For example, $\cali_B=[7,10]$~s means that we consider the steady-state performance within 7-10 seconds. $[\theta_{\tt min}, \theta_{\tt max}]$ represents the minimum and maximum values of the configuration variable during the interval $\cali_{\tt B}$. The table also provides the root mean square (RMS) and mean absolute error (MAE) for each scenario.

\begin{table}[h]
\caption{Steady-state errors over the time interval $\cali_B$ for different scenarios (Unit: deg)}
\label{tab:2}\footnotesize

\setlength{\tabcolsep}{7pt} 
\renewcommand{\arraystretch}{1.5}  

\centering
\begin{tabular}{c c c c c c}
\specialrule{1pt}{0pt}{0pt} % \toprule
$\up$ & $\theta_\star$ & $\cali_B$ (s) &{[}$\theta_{\tt min}, \theta_{\tt max}${]} & RMS  & MAE \\ \cline{1-6}

\multirow{3}{*}{0} & 5$^{*}$ & [7, 10] & [4.8120, 4.9596] & 4.8176   &    0.1824    \\  
& 10 & [2, 5] & [9.7420, 9.8600] & 9.7784   &   0.2216     \\  
& 15 & [2, 5] &[14.8056, 14.9997] & 14.9821   &    0.0179  \\ 

\multirow{3}{*}{10} & 5 & [6, 9] & [4.4162, 4.9989] & 4.9855   &    0.0152    \\  
& 10$^{*}$ & [5, 8] & [10.0690, 10.1251] & 10.1100   &   0.1100     \\  
& 15 & [14, 17] &[14.3030, 14.2173] & 14.2183   &    0.7826  \\ 

\multirow{3}{*}{20} & 5$^{*}$ & [13, 16] & [4.6976, 4.9998] & 4.9823   &    0.0182    \\  
& 10 & [11, 14] & [10.2318, 10.2952] & 10.2759   &   0.2758     \\  
& 15 & [14, 17] &[14.3234, 14.5760] & 14.5303   &    0.4697  \\

\multirow{3}{*}{30} & 5 & [8, 11] & [4.3131, 4.9998] & 4.9706   &    0.0311    \\  
& 10$^{*}$ & [8, 11] & [10.1430, 10.2844] & 10.1952   &   0.1951     \\  
& 15 & [11, 14] &[14.4322, 14.6202] & 14.4641   &    0.5360  \\ 

\multirow{3}{*}{40} & 5 & [7, 10] & [4.7629, 4.9987] & 4.9939   &    0.0385    \\  
& 10 & [7, 10] & [9.6650, 9.7374] & 9.6786   &   0.3214     \\  
& 15$^{*}$ & [13, 16] &[14.4489, 14.9997] & 14.9932   &    0.5783  \\ 

\specialrule{1pt}{0pt}{0pt}% \toprule
\end{tabular}
% \vspace{2mm} % 调整表格与脚注的间距
% \parbox{\linewidth}{\footnotesize 
% The asterisk ($*$) indicates that the control performance will be visualized in Fig.~\ref{fig:ctrl}.}
\end{table}

The results shown in Table~\ref{tab:2} indicate that the proposed controller is capable of achieving high accuracy and performance for configuration control. For the desired configurations $\theta_\star = 5$ and 10 deg at varying vacuum pressures, the highest accuracy was achieved when the vacuum pressure $\up$ = 10~kPa. For the desired configuration $\theta_\star = 15$ deg, the best performance was attained when no vacuum pressure was applied. Comparing the three desired configurations, regulating to $\theta_\star = 5$ deg consistently yields better performance when vacuum pressures are maintained. In all scenarios, the transient stages lasted less than 15 seconds, during which the configuration state $q$ rapidly converged to small neighborhoods around the desired configurations. This highlights the high accuracy of the proposed controller and the efficiency of the proposed model for LJ-based continuum robots.

\begin{figure*}[!htp]
\centering
\includegraphics[width = 1\linewidth]{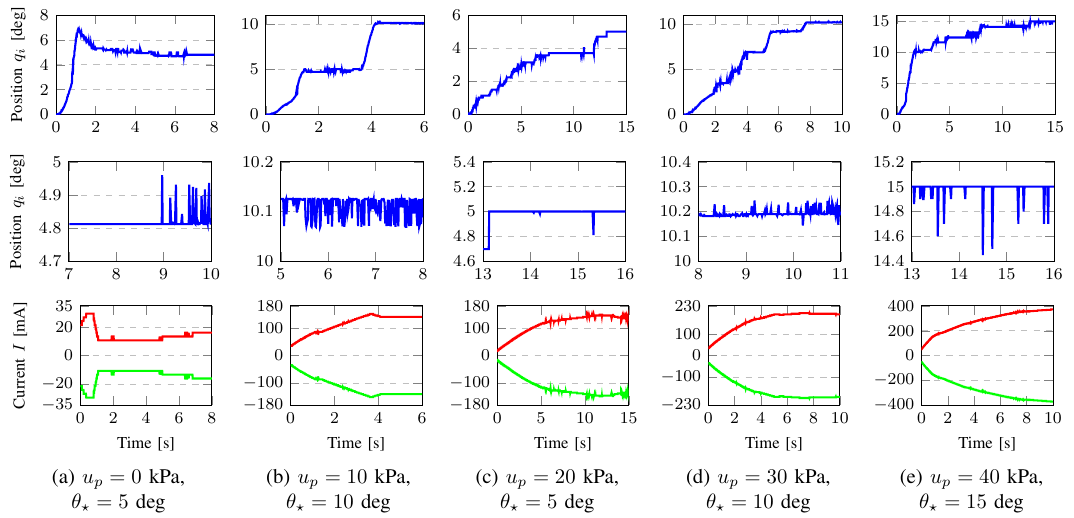} 
\caption{Configuration control performance of the proposed feedback controller.}
\label{fig:ctrl}
\end{figure*}

To further visualize the control performance, we present in Fig.~\ref{fig:ctrl} the control experimental results of the example scenarios---marked with asterisk ($*$) in Table~\ref{tab:2}. The first row of the figure shows the configurations over longer intervals, while the second row shows the trajectories of the configuration variables during the steady-state stage within the time interval $\cali_B$. The current input of the motors is given in the third row of the figures. Note that negative current values indicate that the motor is retracting the cable, while positive values signify the cable is being released.

To verify that the proposed design can concurrently regulate closed-loop stiffness, we equipped a linear actuator perpendicularly to the tangential direction of the continuum robot at the end-effector, as shown in Fig.~\ref{fig:TestRig}(a). Stiffness data were collected using varying control gains $\gamma$, and the results are plotted in Fig. \ref{fig:Stiff}. We then quantified the contributions of the terms $\gamma$ and $\alpha_2(\up)$ to closed-loop stiffness and expressed their percentages, as shown in Table~\ref{tab:3}.

\begin{figure}[h]
    \centering    
    \begin{tikzpicture}
\begin{axis}[%
width  = 0.8\linewidth,
height = 0.45 \linewidth,
ymajorgrids=true,
grid style=dashed,
legend pos=north west,
scale only axis,
label style={font=\footnotesize},
tick label style={font=\footnotesize},
xmin=0,
xmax=8,
ymin=0,
ymax=320,
ylabel={Stiffness [N/m]},
axis background/.style={fill=white},
legend style={at={(0.5,0.6)},draw=none,anchor=north}, 
legend columns=3, 
]

\addplot[mark= pentagon, mark options={solid}, mark size=2pt, dashed,color=orange] table[row sep=crcr]{%
x	y\\
0.1 77\\
2 82.4\\
4 81.6\\
6 87.6\\
8 91.6\\
};
\addlegendentry{0 kPa} 

\addplot[mark= diamond, mark options={solid}, mark size=2pt, dashed, color=green] table[row sep=crcr]{%
x	y\\
0.1 235.1 \\
2 257.7 \\
4 254.4 \\
6 262.7 \\
8 270.9 \\
};
\addlegendentry{10 kPa}

\addplot[mark= triangle, mark options={solid}, mark size=2pt, color=blue, dashed] table[row sep=crcr]{%
x	y\\
0.1 245.2\\
2 258.9\\
4 265.7\\
6 274.8\\
8 286.9\\
};
\addlegendentry{20 kPa}

\addplot[mark= otimes, mark options={solid}, mark size=2pt, dashed, color=cyan] table[row sep=crcr]{%
x	y\\
0.1 254.7\\
2 268.0\\
4 275.8\\
6 276.7\\
8 287.7\\
};
\addlegendentry{30 kPa}

\addplot[mark= square, mark options={solid}, mark size=2pt, color=red, dashed] table[row sep=crcr]{%
x	y\\
0.1 261.4\\
2 278.6\\
4 283.2\\
6 285.1\\
8 299.3\\
};
\addlegendentry{40 kPa}
\node[anchor=north west] at (rel axis cs:0,1) {(a)};
\end{axis}
\end{tikzpicture}%\\
    \begin{tikzpicture}
\begin{axis}[%
width  = 0.8\linewidth,
height = 0.45 \linewidth,
ymajorgrids=true,
grid style=dashed,
legend pos=north west,
scale only axis,
label style={font=\footnotesize},
tick label style={font=\footnotesize},
xmin=0,
xmax=8,
xlabel={The gain $\gamma$},
ymin=0,
ymax=300,
ylabel={Stiffness [N/m]},
axis background/.style={fill=white},
legend style={at={(0.5,0.65)},draw=none,anchor=north}, 
legend columns=3, 
]

\addplot[mark= pentagon, mark options={solid}, mark size=2pt, dashed,color=orange] table[row sep=crcr]{%
x	y\\
0.1 67.9\\
2 86.8\\
4 88.3\\
6 97.7\\
8 106.3\\
};
\addlegendentry{0 kPa} 

\addplot[mark= diamond, mark options={solid}, mark size=2pt, dashed, color=green] table[row sep=crcr]{%
x	y\\
0.1 193.3\\
2 203.5\\
4 208.5\\
6 210.3\\
8 219.1\\
};
\addlegendentry{10 kPa}

\addplot[mark= triangle, mark options={solid}, mark size=2pt, color=blue, dashed] table[row sep=crcr]{%
x	y\\
0.1 223.6\\
2 238.1\\
4 235.6\\
6 233.4\\
8 255.6\\
};
\addlegendentry{20 kPa}

\addplot[mark= otimes, mark options={solid}, mark size=2pt, dashed, color=cyan] table[row sep=crcr]{%
x	y\\
0.1 247.3\\
2 255.2\\
4 257.8\\
6 264.4\\
8 274.2\\
};
\addlegendentry{30 kPa}

\addplot[mark= square, mark options={solid}, mark size=2pt, color=red, dashed] table[row sep=crcr]{%
x	y\\
0.1 257.1\\
2 267.0\\
4 273.2\\
6 274.9\\
8 282.7\\
};
\addlegendentry{40 kPa}
\node[anchor=north west] at (rel axis cs:0,1) {(b)};
\end{axis}
\end{tikzpicture}%
    \caption{Stiffness regulation with different vacuum pressures $\up$ and the gains $\gamma$. (a) Desired configuration $\theta_\star =$ 8 deg. (b) Desired configuration $\theta_\star =$ 10 deg.}
    \label{fig:Stiff}
\end{figure}

The results were obtained by normalizing the stiffness values from Fig.~\ref{fig:Stiff} with respect to $\gamma$ and $\alpha_2$, % (see~\ref{alpha2})
respectively. In Table~\ref{tab:3}, the results are presented in the format of $[r_1,r_2]$, where $r_1$ and $r_2$ correspond to the contributions from $\gamma$ and $\alpha_2$, respectively. From Fig.~\ref{fig:Stiff} and Table~\ref{tab:3}, we note that both $\gamma$ and $\alpha_2$ have a minor influence on the closed-loop stiffness, with maximum contributions of 8.73~\% for $\gamma$ and 3.55~\% for $\alpha_2$. As the vacuum pressure $\up$ increases, the relative contributions of $\gamma$ and $\alpha_2$ to the overall stiffness decrease.

The observed outcomes are consistent with the equation \eqref{eq:K2}, which shows that closed-loop stiffness is primarily affine in the vacuum pressure $\up$ (mainly contributed by the term of $\phi(\up)$), with only a slight influence from the control gain $\gamma$. This indicates that stiffness and configuration can be almost decoupled, allowing stiffness control to be achieved primarily by adjusting the vacuum pressure  $\up$.

\begin{table}[h]
\caption{Stiffness ratios of $\gamma$ and $\alpha_2$ (Unit: \%, represented as $[r_1,r_2]$, where $r_1$ and $r_2$ correspond to the contributions from $\gamma$ and $\alpha_2$, respectively)}
\label{tab:3}\footnotesize

\setlength{\tabcolsep}{3pt} 
\renewcommand{\arraystretch}{1.5}  

\centering
\begin{tabular}{c c c c c c c}
\specialrule{1pt}{0pt}{0pt} % \toprule
$\up$ & $\theta_\star$ & $\gamma$ = 0.1 & $\gamma$ = 2 & $\gamma$ = 4 & $\gamma$ = 6 &$\gamma$ = 8\\ \cline{1-7}

\multirow{2}{*}{0} & 8 &
        [0.13, 2.22] &
        [2.43, 2.97] &
        [4.90, 3.55] &
        [6.85, 3.47] &
        [8.73, 3.16] \\  
        & 10 &
        [0.15, 2.51] &
     [2.30, 2.82] &
     [4.53, 3.28] &
     [6.14, 3.11] &
     [7.53, 2.72] \\

\multirow{2}{*}{10} & 8 &
[0.04, 0.73] &
[0.78, 0.95] &
[1.57, 1.14] &
[2.28, 1.16] &
[2.95, 1.07]\\
        & 10 & [0.05, 0.88] &
     [0.98, 1.20] &
     [1.92, 1.39] &
     [2.85, 1.45] &
     [3.65, 1.32] \\

\multirow{2}{*}{20} & 8 &
[0.04, 0.70] &
[0.77, 0.95] &
[1.51, 1.09] &
[2.18, 1.11] &
[2.79, 1.01] \\ 
        & 10 & [0.04, 0.76] &
     [0.84, 1.03] &
     [1.70, 1.23] &
     [2.57, 1.30] &
     [3.13, 1.13] \\

\multirow{2}{*}{30} & 8 &
[0.04, 0.67] &
[0.75, 0.91] &
[1.45, 1.05] &
[2.17, 1.10] &
[2.78, 1.01] \\
        & 10 & [0.04, 0.69] &
     [0.78, 0.96] &
     [1.55, 1.12] &
     [2.27, 1.15] &
     [2.92, 1.06] \\

\multirow{2}{*}{40} & 8 &
[0.04, 0.65] &
[0.72, 0.88] &
[1.41, 1.02] &
[2.10, 1.07] &
[2.67, 0.97] \\
        & 10 & [0.04, 0.66] &
     [0.75, 0.92] &
     [1.46, 1.06] &
     [2.18, 1.11] &
     [2.83, 1.02] \\        
        
\specialrule{1pt}{0pt}{0pt}% \toprule
\end{tabular}
\end{table}

\section{Concluding Remarks}
\label{sec:6}
In this paper, we propose a novel dynamical model for layer jamming-based continuum robots, which integrates an energy-based modeling approach with the LuGre frictional model. Using the proposed model, we theoretically analyze its dynamical behavior and show its effectiveness in interpreting two key phenomena---shape locking and adjustable stiffness---in this type of robot with quantitative results. As mentioned at the beginning, the motivation of this work is to propose a \emph{control-oriented} model. Accordingly, we further study feedback controller design based on this model to simultaneously regulate both the robot’s configuration and closed-loop stiffness while ensuring some guaranteed stability and convergence properties. These results have been experimentally verified on our robotic platform, showing the efficiency of both the proposed model and the designed controller. Compared to recent work \cite{YIetal23}, the new design allows for more flexible stiffness adjustment over a larger range for the class of LJ-based continuum robots. As future work, it would be of practical interest to extend the proposed model by integrating direct task-to-actuator inversion \cite{DEL2025}, enabling more effective handling of complex control challenges in underactuated continuum robots. While this study focuses on a particular case: tendon-driven actuation with a rigid-link approximation, the underlying idea is promising for extension to other actuation mechanisms and the Cosserat model.

% \section*{Acknowledgments}
% This should be a simple paragraph before the References to thank those individuals and institutions who have supported your work on this article.

% {\appendix[Proof of the Zonklar Equations]
% Use $\backslash${\tt{appendix}} if you have a single appendix:
% Do not use $\backslash${\tt{section}} anymore after $\backslash${\tt{appendix}}, only $\backslash${\tt{section*}}.
% If you have multiple appendixes use $\backslash${\tt{appendices}} then use $\backslash${\tt{section}} to start each appendix.
% You must declare a $\backslash${\tt{section}} before using any $\backslash${\tt{subsection}} or using $\backslash${\tt{label}} ($\backslash${\tt{appendices}} by itself
%  starts a section numbered zero.)}

%{\appendices
%\section*{Proof of the First Zonklar Equation}
%Appendix one text goes here.
% You can choose not to have a title for an appendix if you want by leaving the argument blank
%\section*{Proof of the Second Zonklar Equation}
%Appendix two text goes here.}

\bibliography{reference.bib}
\bibliographystyle{IEEEtran}

\end{document}